\DeclareMathOperator*{\argmin}{arg\,min}
\DeclareMathOperator*{\argmax}{arg\,max}
\theoremstyle{plain}
\newtheorem{thm}{Theorem}
\newtheorem{thmprop}{Proposition}
\theoremstyle{definition}
\newtheorem*{thmrem*}{Remark}
\newtheorem*{thmprop*}{Proposition}
\theoremstyle{plain}
\newtheorem*{rep@theorem}{\rep@title}
\newcommand{\newreptheorem}[2]{%
\newenvironment{rep#1}[1]{%
 \def\rep@title{#2 \ref{##1} (Restated)}%
 \begin{rep@theorem}}%
 {\end{rep@theorem}}}
\theoremstyle{plain}
\theoremstyle{definition}
\def\lpbts{\textbf{lpbTS}}
\def\mts{\textbf{mTS}}
\def\mucb{\textbf{mUCB}}
\def\reg{\mathrm{Reg}}
\def\E{\mathbb{E}}
\def\V{\mathbb{V}}
\def\bmu{\boldsymbol{\mu}}
\def\hmu{\hat{\mu}}
\def\hbmu{\hat{\boldsymbol{\mu}}}
\def\bbR{\mathbb{R}}
\def\cA{\mathcal{A}}
\def\cD{\mathcal{D}}
\def\cH{\mathcal{H}}
\def\cL{\mathcal{L}}
\def\cM{\mathcal{M}}
\def\cN{\mathcal{N}}
\def\cP{\mathcal{P}}
\def\cZ{\mathcal{Z}}
\newcommand{\KL}{\mathbb{KL}}
\newcommand{\lpbTS}{\textbf{lpbTS}}
\DeclareMathOperator{\argsort}{argsort}
\newtheorem{observation}{Observation}
\title{Latent Preference Bandits}
\author{%
  Newton Mwai \\
  Department of Computer Science and Engineering\\
Chalmers University of Technology and University of Gothenburg\\
SE-41296 Gothenburg, Sweden\\
  \texttt{mwai@chalmers.se} \\
  \And
  Emil Carlsson \\
  Sleep Cycle \\
  Gothenburg, Sweden \\
  \texttt{emil@sleepcycle.com} \\
  \And
  Fredrik D. Johansson \\
  Department of Computer Science and Engineering\\
Chalmers University of Technology and University of Gothenburg\\
SE-41296 Gothenburg, Sweden\\
  \texttt{frejohk@chalmers.se} \\
}
\date{}
\begin{document}

\maketitle

\begin{abstract}
    Bandit algorithms are guaranteed to solve diverse sequential decision-making problems, provided that a sufficient exploration budget is available. However, learning from scratch is often too costly for personalization tasks where a single individual faces only a small number of decision points. Latent bandits offer substantially reduced exploration times for such problems, given that the joint distribution of a latent state and the rewards of actions is known and accurate. In practice, finding such a model is non-trivial, and there may not exist a small number of latent states that explain the responses of all individuals. For example, patients with similar latent conditions may have the same preference in treatments but rate their symptoms on different scales. With this in mind, we propose relaxing the assumptions of latent bandits to require only a model of the \emph{preference ordering} of actions in each latent state. This allows problem instances with the same latent state to vary in their reward distributions, as long as their preference orderings are equal. We give a posterior-sampling algorithm for this problem and demonstrate that its empirical performance is competitive with latent bandits that have full knowledge of the reward distribution when this is well-specified, and outperforms them when reward scales differ between instances with the same latent state. 
\end{abstract}

%
%
\section{Introduction}
\label{sec:introduction}
Personalized decision-making has promised to revolutionize healthcare for decades but its full impact has yet to come. In the abstract, the problem is equivalent to recommendations for media consumption or ad placement, in which algorithms make decisions and observe outcomes (reward) tailored for an individual (problem instance). Bandit and reinforcement learning algorithms are well-suited for such problems and have seen academic and commercial success~\citep{li2010contextual, chapelle2011empirical,Bouneffouf2020, yancey2020sleeping, o2022should}. However, the scale of data varies substantially with applications: a consumer may be exposed to dozens of recommendations in a single session, and hundreds over their subscription to a service, but a patient may try only a handful of treatments. Given their sample-hungry nature, learning a personalized treatment regime using a classical bandit algorithm to explore solutions for a single individual is usually infeasible. 

A promising solution to personalization with short exploration times is to leverage structural similarities between problem instances (e.g., patients). Contextual bandits is a well-studied solution that models the expected reward of actions as fixed functions of an observed context variable~\citep{lattimore2020bandit}. However, this assumes that two instances would yield the same reward on average for the same action, were they in the same situation. In practice, observed contexts are rarely rich enough to capture all individual preferences. For example, in problems where rewards represent the subjective rating of an experience, different people tend to have different internal rating scales, unobserved by the learning algorithm. Latent bandits~\citep{maillard2014latent} attempt to overcome this limitation by allowing the reward function to depend on a latent state, observed partially and noisily through the context and the outcome of actions. If the latent state is small relative to the number of actions, or the reward is a simpler function of the latent state than the context, this structure may be sufficient to substantially reduce exploration times~\citep{hong2020latent,mwai2023fast}. However, existing works on latent bandits assume that the full posterior distribution of the latent state is known at inference time but give little or no guidance for how it can be learned.

\paragraph{Contributions.} We propose \emph{Latent Preference Bandits (LPB)}---a variant of latent bandits where each discrete latent state defines a preference ordering over actions, but not a full distribution of rewards. We show that knowing only the set of possible preference orderings can still substantially lower the difficulty of the problem, as determined by the number of constraints added to an instance-specific lower bound on the asymptotic regret. Moreover, because states only determine preference orderings, we show that LPB accommodates generalization between problem instances (e.g., patients) with different absolute reward scales, but with shared relative preferences. We also show that knowledge of the set of possible preferences gives a bound on the posterior probability of the latent state and leverage this result in a regret minimization algorithm \lpbTS{}, based on sampling from the approximate posterior. We show empirically that it is comparable to latent bandits with fully known reward distributions when instance rewards lie in the same scale, and outperform them when instances differ in absolute reward scales. Our experiments confirm that the benefit of utilizing preference structure over non-latent baselines increases as the number of arms (preference constraints) grows much larger than the number of states.

%
%
\section{Related work}
\label{sec:related}

The latent bandit model was first studied by \citet{maillard2014latent} in the regret minimization setting and has later been revisited for regret minimization by~\citet{atan2018global, hong2020latent, pmlr-v206-pal23a, balciouglu2024identifiable} and for best-arm identification by~\citet{mwai2023fast}. Recent works have also extended the latent bandit to a non-stationary version where the latent variable evolve over time~\citep{hong2020non, nelson2022linearizing, russoswitching}. A common theme in these works is that the reward distributions are determined completely by the latent state. This differs from our setting in the sense that we assume the latent state only defines an \emph{ordering} of the arms.

Bandit problems with preference feedback have been widely studied in dueling bandits~\citep{yue2009interactively, sui2018advancements, bengs2021preference, herman2024} which are a bandit class where action pairs are selected at each round, and the rewards are independent, stochastic preference feedback of which arm is preferred~\citep{sui2018advancements}. The goal is to identify the best arm, or minimize regret via pairwise comparisons~\citep{ailon2014reducing, bengs2021preference}. In dueling bandits, preferences are typically directly observed~\citep{bengs2021preference}, and often modeled with either utility functions~\citep{yue2009interactively} or Probabilistic models like Bradley-Terry Models (BTMs)~\citep{bradley1952rank, vigneau1999analysis}. Our setting differs from dueling bandits in that we observe feedback as absolute numerical rewards for a single action, and preferences are inferred from these, in contrast to being observed directly through relative preference feedback. 

Contextual bandits~\citep{chu2011contextual, agrawal2013thompson, zhou2015survey, lattimore2020bandit} exploit structure between context, actions, and rewards, promising high personalisation. However, they often assume a fixed expected reward across instances of the same context-action pair, which is inapplicable whenever no such context variable exists, even when the action preference is maintained. In our current work, we do not consider rewards structured based on context variables, but focus on the utility of latent preferences instead.

%
%
\section{Multi-armed and latent bandits}
\label{sec:setup}
We study sequential decision-making problems where the goal is to select actions $a \in \cA = \{1, ..., k\}$ to maximize the corresponding reward $R_a$, \emph{regret minimization} problems. The goal is defined with respect to the unknown expectations of rewards $\mu_{a} \coloneqq \E[R_a]$, with the optimal action $a^* = \argmax_{a} \mu_a$ and optimal reward $\mu^* = \mu_{a^*}$. We aim to select actions $a_t$ according to a policy $\pi$ on times steps $t=1, ..., T$ until a horizon $T$ to accumulate as little regret $\reg(T)$ as possible, 
\begin{equation}\label{eq:regret_min}
\underset{\pi}{\text{minimize}} \;\; \reg(T) \quad \mbox{with} \quad  \reg(T) \coloneqq \sum_{t=1}^T\E_{\pi}[\mu^* - R_{A_t}]~.
\end{equation}

A central challenge in multi-armed bandits (MAB) is that solving \eqref{eq:regret_min} requires excessively many trials, especially when the number of arms, $k$ is large. To remedy this, \emph{latent bandits}~\citep{maillard2014latent}, categorize instances based on a discrete \emph{latent} variable $Z \in \{1, ..., m\}$, representing, for example, the disease subtypes of patients with a given condition. In this way, one instance can inform another one with the same latent state. If the conditional distribution of rewards $p(R_a \mid Z=z)$ and marginal distribution $p(Z=z)$ for each latent state $z$ are known or can be learned, these are sufficient to \emph{infer} $Z$ for a new instance and to minimize cumulative regret~\citep{hong2020latent}.

\citet{hong2020latent} proposed the  latent bandit algorithm \mts{} for regret minimization based on Thompson sampling~\citep{thompson1933likelihood,agrawal2012analysis}. At each round $t$, the algorithm samples a state $z_t$ from the posterior distribution (ignoring context variables here)
$$
p(Z=z \mid a_1, ..., a_{t-1}, r_1, ..., r_{t-1}) \propto \prod_{s=1}^{t-1} p(R_{a_s} = r_s \mid z)p(Z=z)
$$
and selects the highest-reward action of that state, $a^*_z = \argmax_a \mu_{a,z}$ where $\mu_{a,z} \coloneqq \E[R_a \mid Z=z]$.

Estimating a full latent-variable model $\mathcal{M} = (p(Z), \{p(R_a \mid Z)\}_{a=1}^k)$ is nontrivial. It may not be identifiable from observable information and may require a very large data set even if it is. 
Moreover, requiring that all instances with state $Z=z$ follow the same reward distribution $p(R_a \mid Z=z)$ prevents instances from having individual \emph{reward scales}, as in the example of subjective ratings from the Introduction. Fortunately, as we will see, knowing the complete likelihood is not necessary to achieve benefits over learning tabula rasa for a new bandit instance. In fact, knowing only the optimal arm in each latent state leads to improved regret bounds when $m<k$. However, imposing additional structure on the rewards of different actions can help distinguish the true latent state from alternatives. 
To this end, we study latent bandits with states defined by \emph{preference orderings} of actions. 

%
%
\section{Latent preference bandits}
\label{sec:method}
We introduce the \emph{latent preference bandit} (LPB) problem and give an instance-dependent lower bound for the best achievable regret which depends on the structured preferences in rewards. Next, we propose a posterior-sampling algorithm to solve the problem and discuss its performance. 

\begin{figure}[t]
    \centering
    \begin{subfigure}{0.30\linewidth}
        \includegraphics[width=\linewidth]{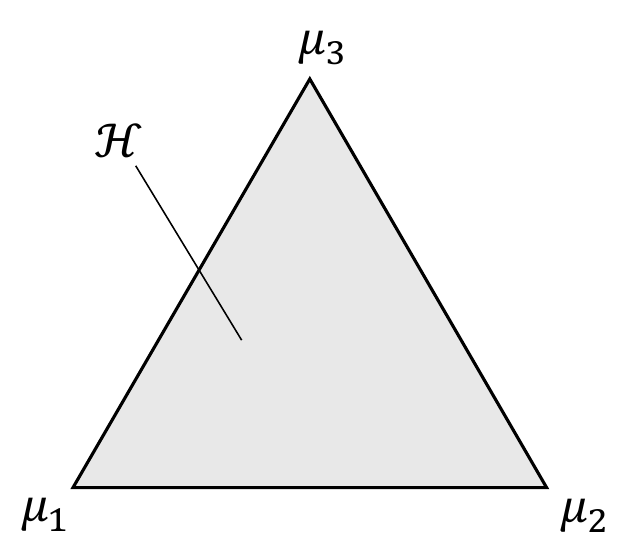}
        \caption{Multi-armed bandit. No constraints on reward means\\\;}    
    \end{subfigure}
    \hfill
    \begin{subfigure}{0.30\linewidth}
        \includegraphics[width=\linewidth]{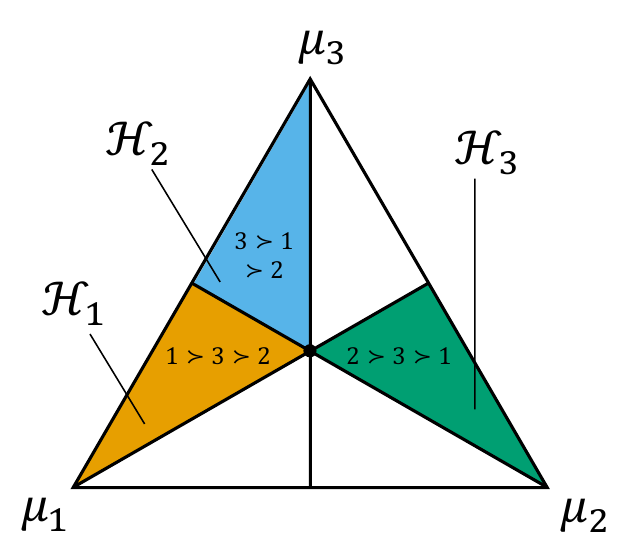}
        \caption{Latent preference bandit (this work). Known possible orderings, unknown reward means}    
    \end{subfigure}
    \hfill
    \begin{subfigure}{0.30\linewidth}
        \includegraphics[width=\linewidth]{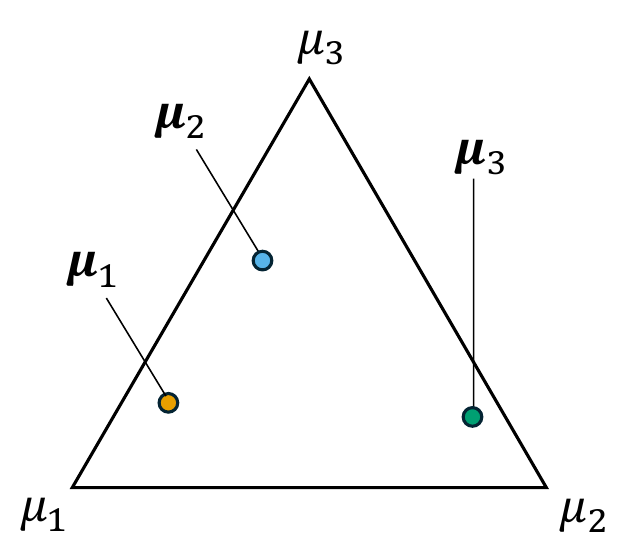}
        \caption{Latent bandit: Fully known possible reward means\\\;}    
    \end{subfigure}    
    \caption{Illustration of the latent preference bandit and related problems for reward means on the 2-simplex $\bmu \in \Delta^{k-1}$. In the MAB problem, no structure is known. In latent bandits, the full vector of reward means $\mu_z$ is known for each latent state $z$. In latent preference bandits, only the set of possible orderings is known (shown as colored segments) but two problem instances with the same latent state $z$ may differ in their means as long as the orderings of their reward means are equal. \label{fig:illustration}}
\end{figure}

In latent preference bandits, a problem instance $(z, \cP)$ is defined by the latent state $z$ and reward distributions for the $k$ actions, $\cP = (P_1, ..., P_k)$ structured according to $z$. For clarity, we focus on Gaussian rewards with equal variance $\sigma^2$ where $P_a = \cN(\mu_a, \sigma^2)$ and, for the remainder of the paper, we represent instances $(z, \bmu)$ by their latent states and reward means $\bmu = [\mu_1, ..., \mu_k]^\top$. Each latent state $z \in [m]$ is associated with a preference ordering of actions $O_z = (o_{z,1}, ..., o_{z,k})$ such that the expected rewards for problem instances $(z, \bmu)$ are ordered according to $O_z$, i.e., $\mu_{o_{z,1}} \geq \mu_{o_{z,2}} \geq \cdots \geq \mu_{o_k}$.\footnote{We drop the $z$-subscript in $\mu_{o_{z,i}}$ for readability when clear from context.} For convenience, let $i_{a,z}$ denote the rank of action $a$ under $O_z$, that is $i_{a,z}=j$ such that $o_{z,j} = a$, and let $a \succeq_z a' \Leftrightarrow i_{a,z} < i_{a',z}$ denote that $a$ is preferred to $a'$ under $z$.

Two problem instances $(z, \bmu), (z', \bmu')$ with the same latent state $z = z'$ are guaranteed to have the same preference orderings but may not have the same distributions of rewards. This allows for modeling individual rating scales, as in the following example: Two patients with a chronic condition share the same subtype of disease $z$ which determines what therapies $a \in \cA$ are preferred over which other therapies. However, the two patients have different tolerance for pain and give different ratings $R_a$ for their symptoms under treatment $a$ even if their relative preferences are the same. 
We let $\cH_z = \{\bmu \in \bbR^k : \forall a \succeq_z a' : \mu_{a} \geq \mu_{a'}\}$ denote the set of all reward parameters consistent with $z$. 

At the start of learning, algorithms are assumed to have access to a model of $\{(z, O_z)\}_{z \in \cZ}$ of preference orderings but have no knowledge of rewards---they only know which preference orderings are possible. We assume that all latent states have unique orderings. 
We illustrate the LPB problem in Figure~\ref{fig:illustration} for the special case of reward means in the 2-dimensional simplex, and compare it to the standard multi-armed bandit (MAB) and the latent bandit problem from \citet{hong2020latent}.
The LPB problem has more structure than MAB but less than full-model latent bandits. To see this, we can adapt an instance-specific lower bound on the asymptotic regret from~\citet{maillard2014latent}, in turn adapted from~\citet{agrawal1989asymptotically}.
\begin{thm}[\citet{agrawal1989asymptotically}]\label{thm:lower_bound}
    Let $(z, \bmu)$ be the true latent state and reward means of an instance and let $\cA_{-} = \cA \setminus \{a^*_{z}\}$ be the set of sub-optimal arms $a$ with optimality gaps $\Delta_a=\mu^* - \mu_a$. Then, for any uniformly good control scheme, i.e., that achieves $\reg(T) = o(T^b)$ for any $b>0$, 
    \begin{equation}\label{eq:lower_bound}
    \liminf_{T\rightarrow \infty} \frac{\reg(T)}{\log T} \geq \min_{w \in \cP(\cA_{-})} \max_{\lambda \in \mathrm{Alt}(z,\bmu)} \frac{\sum_{a\in \cA_-}w_a \Delta_a}{\sum_{a\in \cA_-}w_a \KL(\mu_a || \lambda_a)}, 
    \end{equation}
    where $\mathrm{Alt}(z,\bmu) = \cup_{z : a^*_{z'} \neq a^*_z} \cH_z$,  $\cP(\cA_-)$ is the simplex over $\cA_-$, and $\KL(\mu_a || \lambda_a)$ is the KL-divergence between Gaussian distributions with unit variance and means $\mu_a, \lambda_a$.
\end{thm}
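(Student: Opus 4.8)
The plan is to run the classical change-of-measure argument (Lai--Robbins, generalized by Graves--Lai), instantiated with the alternative set $\mathrm{Alt}(z,\bmu)$ carved out by the preference structure. Write $N_a(T)=\sum_{t\le T}\mathds 1\{A_t=a\}$ for the number of pulls of arm $a$, and let $\E_{\bmu},\Prob_{\bmu}$ (resp.\ $\E_\lambda,\Prob_\lambda$) denote expectation and probability when the instance has mean vector $\bmu$ (resp.\ an alternative $\lambda$). The first step is the standard regret decomposition: by the tower rule and $\Delta_{a^*_z}=0$,
\[
\reg(T)=\sum_{t\le T}\E_{\bmu}[\mu^*-\mu_{A_t}]=\sum_{a\in\cA_-}\Delta_a\,\E_{\bmu}[N_a(T)].
\]

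Second, fix any alternative $\lambda\in\mathrm{Alt}(z,\bmu)$, i.e.\ $\lambda\in\cH_{z'}$ for some $z'$ with $a^*_{z'}\neq a^*_z$; the relevant ``confusing'' alternatives additionally coincide with $\bmu$ on the currently optimal arm, $\lambda_{a^*_z}=\mu_{a^*_z}$, so that $\KL(\mu_{a^*_z}\|\lambda_{a^*_z})=0$ and only sub-optimal arms enter the bound. The divergence-decomposition identity for the trajectory log-likelihood ratio between $\bmu$ and $\lambda$, combined with the data-processing inequality applied to the event $E_T=\{N_{a^*_z}(T)>T/2\}$, gives
\[
\sum_{a\in\cA_-}\E_{\bmu}[N_a(T)]\,\KL(\mu_a\|\lambda_a)\;\ge\;\mathrm{kl}\!\big(\Prob_{\bmu}(E_T),\,\Prob_\lambda(E_T)\big),
\]
with $\mathrm{kl}$ the binary relative entropy. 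Uniform goodness pins down both probabilities: under $\bmu$, $\E_{\bmu}\!\big[\sum_{a\in\cA_-}N_a(T)\big]=o(T^b)$ for all $b>0$, so Markov's inequality gives $\Prob_{\bmu}(E_T)\to1$; under $\lambda$ the arm $a^*_z$ is sub-optimal, hence $\E_\lambda[N_{a^*_z}(T)]=o(T^b)$ and $\Prob_\lambda(E_T)=o(T^{b-1})$. Substituting into $\mathrm{kl}(p,q)\ge p\log(1/q)-\log 2$ and letting $b\downarrow 0$ yields, for every such $\lambda$,
\[
\sum_{a\in\cA_-}\E_{\bmu}[N_a(T)]\,\KL(\mu_a\|\lambda_a)\;\ge\;(1-o(1))\log T.
\]

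Third, I convert this family of linear constraints (one per admissible $\lambda$) into the stated $\min$--$\max$ bound. Put $n_a=\E_{\bmu}[N_a(T)]$, $N=\sum_{a\in\cA_-}n_a$, and $w^{(T)}_a=n_a/N\in\cP(\cA_-)$ (for $T$ large enough that $N>0$). Dividing the last display by $N$, taking the infimum over admissible $\lambda$, and combining with the regret decomposition,
\[
\frac{\reg(T)}{\log T}=\frac{N\sum_{a\in\cA_-}w^{(T)}_a\Delta_a}{\log T}\;\ge\;(1-o(1))\,\frac{\sum_{a\in\cA_-}w^{(T)}_a\Delta_a}{\inf_{\lambda\in\mathrm{Alt}(z,\bmu)}\sum_{a\in\cA_-}w^{(T)}_a\KL(\mu_a\|\lambda_a)}.
\]
Because the numerator does not depend on $\lambda$, the right-hand ratio equals $\max_{\lambda}$ of the ratio, and since $w^{(T)}$ is feasible for the outer minimisation it is at least $\min_{w\in\cP(\cA_-)}\max_{\lambda\in\mathrm{Alt}(z,\bmu)}\big(\sum_{a\in\cA_-}w_a\Delta_a\big)\big/\big(\sum_{a\in\cA_-}w_a\KL(\mu_a\|\lambda_a)\big)$; taking $\liminf_{T\to\infty}$ absorbs the $1-o(1)$ and proves the claim. (Equivalently, one may argue via the dual: scaling a distribution $w$ by a constant trades the KL-constraint $\inf_\lambda\sum w_a\KL\ge1$ against the objective $\sum w_a\Delta_a$, which is exactly what turns the Graves--Lai optimization into the displayed ratio form.)

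I expect the one genuinely delicate point to be the change-of-measure step: making the trajectory-likelihood decomposition rigorous for an adaptive policy, and — crucially in this structured setting — restricting to alternatives that agree with $\bmu$ on $a^*_z$, which is what localises the bound to $\cA_-$ and is precisely where the reduced geometry of $\mathrm{Alt}(z,\bmu)=\bigcup_{z':a^*_{z'}\neq a^*_z}\cH_{z'}$ (a union of a few order-consistent regions, rather than all mean vectors with a different optimal arm) makes the bound smaller than for plain MAB. Everything else — the Markov-type bounds supplied by uniform goodness, continuity of the Gaussian $\KL$, and the scaling argument above — is routine; note also that the statement is only a $\liminf$ claim, so no uniformity in $T$ beyond uniform goodness is required.
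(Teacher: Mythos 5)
The paper does not actually prove Theorem~\ref{thm:lower_bound}---it imports the statement from \citet{agrawal1989asymptotically} via \citet{maillard2014latent} and only instantiates the alternative set as $\Alt(z,\bmu)=\cup_{z':a^*_{z'}\neq a^*_z}\cH_{z'}$---so your reconstruction cannot be compared to an in-paper argument. What you wrote is the standard Graves--Lai change-of-measure proof in its modern (Garivier--Kaufmann) form: regret decomposition over $\cA_-$, divergence decomposition plus data processing on the event $E_T=\{N_{a^*_z}(T)>T/2\}$, uniform goodness to force $\mathrm{kl}(\Prob_\bmu(E_T),\Prob_\lambda(E_T))\geq(1-o(1))\log T$, and the normalization $w_a^{(T)}=\E_\bmu[N_a(T)]/N$ to pass to the min--max ratio. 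The outline is correct, and you rightly identify the one substantive point: the change of measure only yields a constraint localised to $\cA_-$ for alternatives with $\lambda_{a^*_z}=\mu_{a^*_z}$, since otherwise the term $\E_\bmu[N_{a^*_z}(T)]\,\KL(\mu_{a^*_z}\|\lambda_{a^*_z})$, of order $T$, cannot be discarded. Note that this restriction is not cosmetic: as literally written, $\Alt(z,\bmu)$ may contain a $\lambda$ agreeing with $\bmu$ on all of $\cA_-$ but with $a^*_z$ demoted, for which the denominator $\sum_{a\in\cA_-}w_a\KL(\mu_a\|\lambda_a)$ vanishes and the stated bound is $+\infty$; the theorem is only correct under the standard convention (used in the cited sources) that the confusing set is $\{\lambda\in\Alt(z,\bmu):\lambda_{a^*_z}=\mu_{a^*_z}\}$. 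Your proof supplies exactly that convention, so it proves the intended (correct) statement rather than the literal one.

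Two technical points remain if you want this fully rigorous. First, the step from the per-$\lambda$ inequality $\sum_{a\in\cA_-}n_a\KL(\mu_a\|\lambda_a)\geq(1-o(1))\log T$ to the same bound for $\inf_\lambda$ requires the $o(1)$ to be uniform over $\lambda$, which does not follow from uniform goodness alone (the rate in $\E_\lambda[N_{a^*_z}(T)]=o(T^b)$ may degenerate as $\lambda$ approaches the boundary of $\Alt$); the standard fix is to fix a finite subset of $\Alt$, or to work with $\liminf$ along a subsequence on which $w^{(T)}$ converges in the compact simplex and invoke continuity of the Gaussian $\KL$. Second, your $w^{(T)}$ depends on $T$, so the inequality $\sup_\lambda(\cdot)\geq\min_w\sup_\lambda(\cdot)$ must be applied before taking $\liminf$; you do this, but the subsequence/compactness argument above is what makes the two limiting operations commute cleanly. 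Both are routine and do not affect the validity of the approach.
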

The set $\mathrm{Alt}(z,\bmu)$ of alternative instances contain only parameters $\mu'$ consistent with orderings that are different from that of the true state, $O_z$. If every possible ordering is represented by a latent state ($m=k!$), the problem is difficult, comparable to an MAB without structure. On the other hand, if $m \ll k!$ and the orderings of latent states are random, the closest confusing instance $\lambda^*$, maximizing \eqref{eq:lower_bound}, is likely to have an ordering with a large number of inversions to $\bmu$, the $\KL$ term will be large, and the bound small---the problem is easier to solve. We return to this discussion in Section~\ref{sec:scaling}.
 
%
%
\subsection{Regret minimization with absolute feedback}
\label{sec:regret_min}
We focus on learning from absolute feedback\footnote{In the Appendix, we briefly discuss latent preference bandits with relative (dueling) feedback.} where decision-makers select a single action $a_t \in [k]$ at each round $t$ and observe a stochastic reward $R_t \in \bbR$, generated according to the unknown instance $(z, \bmu)$ with $\bmu \in \cH_z$. Disregarding the latent state, the setting coincides with the classical MAB problem. Thus, the primary target for algorithms that exploit knowledge of the set of possible latent states $z \in [m]$ and their preference orderings $O_z$ is to minimize regret or identify the optimal arm more rapidly than MAB algorithms and other algorithms that exploit the latent structure.

The worst-case asymptotic regret for the MAB problem is well-known to be $\Theta(\sqrt{kT})$ as $T\rightarrow \infty$, achieved by several algorithms, including upper-confidence bound (UCB) maximization and posterior (Thompson) sampling~\citep{lattimore2020bandit}. \citet{hong2020latent} proposed the \mts{} and \mucb{} algorithms for the latent bandit problem with $m$ states and showed that their worst-case asymptotic regret with full knowledge of the posterior of observations, including the reward distribution, is $O(\sqrt{mT\log T})$. This matches the MAB result up to log factors, but with $m$ instead of $k$, which can be very beneficial if $m < k$. However, with knowledge of the set of possible latent states, this can be achieved simply by restricting the action set.

\begin{thmprop}\label{prop:regret_upper}Consider the following algorithm. 
Whenever $k<m$, restrict the action to the subset $\cA_\cZ^*$ of optimal $u < m$ arms of which each is optimal in at least one latent state, $\cA_\cZ^* = \{a\in [k] : \exists z\in \cZ : a_z^* = a\}$, and run a standard MAB algorithm restricted to $\cA_\cZ^*$. When $k\geq m$, run a standard MAB algorithm on $\cA = [k]$. This procedure achieves $O(\sqrt{\min(k, m)T})$ regret in the worst case on the latent bandit and latent preference bandit problems. 
\end{thmprop}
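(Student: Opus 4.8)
The plan is a short reduction to the worst-case optimal regret $O(\sqrt{nT})$ for an $n$-armed bandit, treating separately the two regimes of the algorithm. When $m\geq k$ we have $\min(k,m)=k$ and there is nothing to do beyond invoking this bound: the algorithm runs a standard minimax-optimal MAB method on $\cA=[k]$ (e.g.\ UCB or Thompson sampling~\citep{lattimore2020bandit}), whose worst-case regret is $O(\sqrt{kT})=O(\sqrt{\min(k,m)T})$, and the latent structure is never used. So the content lies in the regime $m<k$, where $\min(k,m)=m$ and the algorithm plays on the restricted set $\cA_\cZ^* = \{a\in[k] : \exists z\in\cZ,\ a^*_z=a\}$.

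For that regime I would establish two facts. First, $|\cA_\cZ^*|\leq m$, which is immediate since $\cA_\cZ^*$ is the union of the $m$ (possibly coinciding) singletons $\{a^*_z\}$, one per latent state. Second --- the only step that uses the LPB structure --- an optimal arm of the true instance lies in $\cA_\cZ^*$, so that restricting the action set never discards an optimal arm: writing $z$ for the true latent state we have $\bmu\in\cH_z$, hence $\mu_a\geq\mu_{a'}$ whenever $a\succeq_z a'$, so the top-ranked arm $o_{z,1}=a^*_z$ satisfies $\mu_{o_{z,1}}\geq\mu_a$ for every $a\in[k]$; thus $\mu^*=\mu_{o_{z,1}}$ and $a^*_z\in\cA_\cZ^*$. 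The identical reasoning applies to the full-model latent bandit, where $a^*_z=\argmax_a\mu_{a,z}$ is by construction an optimal arm of every instance with latent state $z$.

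Given these two facts I would conclude as follows. Because $\mu^*=\max_{a\in\cA_\cZ^*}\mu_a$, the regret of the restricted procedure equals the regret of the underlying MAB algorithm run on the $|\cA_\cZ^*|$-armed instance $(\mu_a)_{a\in\cA_\cZ^*}$, so the minimax MAB bound gives $\reg(T)=\sum_{t=1}^{T}\E[\mu^*-R_{A_t}]=O(\sqrt{|\cA_\cZ^*|\,T})=O(\sqrt{mT})$ using $|\cA_\cZ^*|\leq m$. Together with the $m\geq k$ case this yields $\reg(T)=O(\sqrt{\min(k,m)T})$ on both the latent bandit and the latent preference bandit problems. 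I do not expect a genuine obstacle: the proof is a one-line reduction once fact (ii) is in place, and the only point requiring care is that the restriction to $\cA_\cZ^*$ is lossless --- which is exactly what fact (ii) guarantees, and which in turn rests on the definition of $\cH_z$ forcing the top of the ordering $O_z$ to maximize $\bmu$.
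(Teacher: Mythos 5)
Your proposal is correct and follows the same route as the paper's own (much terser) proof: observe that $|\cA_\cZ^*|\leq m$, that the optimal arm of the true instance is never discarded by the restriction (which you rightly trace back to the definition of $\cH_z$ forcing $a^*_z=o_{z,1}$ to maximize $\bmu$), and then invoke the standard $O(\sqrt{nT})$ minimax bound on the restricted action set. Your write-up merely makes explicit the losslessness step that the paper states without elaboration.
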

\begin{proof}
    The procedure uses $\min(|\cA|, |\cA_\cZ^*|) \leq \min(k, m)$ arms, and the optimal arm for any latent bandit or LPB instance is contained in either action set. We can directly apply standard regret bounds for MAB algorithms (see e.g.,~\citet{lattimore2020bandit}) to the restricted action sets.
\end{proof}

Notably, the procedure in Proposition~\ref{prop:regret_upper} does not use knowledge of the set of possible reward \emph{means}, only the set of possible best arms. On its face, it may seem that the \mts{} algorithm of \citet{hong2020latent} is matched by this simple algorithm. However, as we see empirically in Section~\ref{sec:experiments}, this is far from true: \mts{} achieves substantially better performance by exploiting reward structure, even though this is not yet explained by theory. In fact, when all reward means are distinct across states and known, $\mu_{a,z} \neq \mu_{a,z'}$, regret constant in $T$ is achievable for latent bandits. However, this is not achievable in the LPB setting since the mean parameters must be estimated during exploration.
In short, to reach optimal empirical performance on the latent preference bandit problem, and hope to match the lower bound in \eqref{eq:lower_bound}, it is critical to exploit the structure between arm parameters. 

\begin{algorithm}[t]
    \centering
    \caption{Thompson sampling for LPB regret minimization with absolute rewards (\lpbts{})}\label{alg:regret_min_abs}%
    \begin{algorithmic}[1]
        \State Let $\hat{p}(z) = \frac{1}{m}$ for latent states $z=1, ..., m$
        \For{$t=1, ...$}
        \State Sample $z_t \sim \hat{p}(z)$
        \State Perform action $a_t = a^*_{z_t}$
        \State Observe $r_t$ from the environment
        \For{$z = 1, ..., m$}
        \State Update mean parameters $\hat{\mu}_z$ according to \eqref{eq:mu_est}
        \EndFor
        \State Update latent state posterior $\hat{p}(Z)$ according to \eqref{eq:appr_posterior}
        \EndFor
    \end{algorithmic}
\end{algorithm}

\subsection{A posterior-sampling LPB algorithm exploiting the ordering of means}

We propose the \lpbts{} algorithm (Algorithm~\ref{alg:regret_min_abs}) for the LPB problem based on sampling from the posterior of the latent state and selecting the optimal arm for that state. First, let $\cD_T = ((a_1, r_1), ..., (a_T, r_T))$ denote the history of the first $T$ observations collected during exploration for a problem instance $(z, \bmu)$. The likelihood of $\cD_T$ under a state $z$ with preference ordering $O_z$ is then
$$
\cL(\cD_T \mid Z=z) = \prod_{t=1}^T p(r_t \mid a_t, z) = \int_{\bmu \in \cH_z} p(\bmu \mid z) \prod_{t=1}^T  p(r_t \mid a_t, \bmu, z) d\bmu~.
$$
and can be used to construct the posterior probability $p(Z=z \mid \cD_T)$, provided that a well-specified parameter prior $p(\bmu \mid z)$ is known for each latent state $z$. In general, the constraint $\bmu \in \cH_z$ means that no closed-form expression exists, and computing it exactly is intractable. In principle, we could appeal to variational inference~\citep{jordan1999introduction} for an approximation, and if a strong parameter prior $p(\bmu \mid z)$ is available, this can offer substantially more information to the learner than the ordering $o_{1,z} \succeq o_{2,z} \succeq ... \succeq o_{k,z}$ implied by $z$. In the extreme case that $p(\bmu \mid z)$ is a delta function, this coincides with the latent bandit problem of~\citet{hong2020latent}. As we try to minimize the information needed about the latent variable, \emph{we assume that no parameter prior is available}. 

Without a parameter prior, the likelihood $p(r_t \mid a_t, z)$ is not fully defined, but we can construct an upper bound on the likelihood by considering the mean configuration with the highest likelihood for the data restricted to the available orderings implied by $\cZ$. For all states $z$, 
\begin{align*}
\cL(\cD_T \mid Z=z) & =  \int_{\bmu \in \cH_z} p(\bmu \mid z) \prod_{t=1}^T  p(r_t \mid a_t, \bmu, z) d\bmu \leq \sup_{\bmu \in \cH_z} \prod_{t=1}^T  p(r_t \mid a_t, \mu_{a_t})~.
\end{align*}
With Gaussian rewards, maximizing this \emph{upper} bound corresponds to minimizing the mean squared error of $\bmu$ in predicting the observed reward, constrained to the set $\cH_z$. Thus, under the assumption that $z$ is the correct latent state, we may estimate the mean parameters as follows. 
\begin{equation}\label{eq:mu_est}
\hbmu_z \coloneqq \argmin_{\bmu \in \cH_z} -\ell(\cD_T \mid \bmu), \quad \mbox{where} \quad
-\ell(\cD_T \mid \bmu) \propto \sum_{t=1}^T \frac{(r_t - \mu_{a_t})^2}{\sigma^2}
\end{equation}
With $\{\hbmu_z\}$ the minimizers of \eqref{eq:mu_est} for all $z$, we can construct an \emph{optimistic} posterior estimate, 
\begin{equation}\label{eq:appr_posterior}
\forall z : \hat{p}(z\mid \cD_t) \coloneqq \frac{1}{\alpha} p(\cD_t \mid \hbmu_z)~.
\end{equation}
where $\alpha$ is the normalization constant.
Note that this is an approximation as the bound $p(\cD_t \mid \hbmu_z) \geq p(\cD_t \mid z)$ affects also the normalization constant of the posteriors: The looser the bound for one state, the lower the posterior probability of other states.

We design our method, \lpbts{} (Algorithm~\ref{alg:regret_min_abs}) for regret minimization by selecting the optimal arm for a state sampled from the approximate posterior \eqref{eq:appr_posterior}. 
The constrained maximum-likelihood estimation (MLE) problem in \eqref{eq:mu_est} solved for each state is a quadratic program with linear inequality constraints. We show below that it can be solved using off-the-shelf solvers for isotonic regression~\citep{barlow1972isotonic}. We discuss the computational complexity of \lpbTS{} in the Appendix.

\begin{thmprop}\label{thmprop:isotonic}
    Let $n_a = \sum_{t=1}^T\mathds{1}[a_t = a]$ and define $\bar{w}_a = \frac{n_a}{\sigma^2}$. Next, let $O_z = (o_1, ..., o_k)$ be the preference ordering of latent state $z$. Then, the solution to the isotonic regression problem with outcomes $y_a = \frac{1}{n_a}\sum_{t : a_t=a}r_t$ and sample weights $\bar{w}_a$ 
    \begin{equation*}
    \begin{aligned}
    & \underset{\bmu \in \bbR^d}{\text{minimize}}
    & & \sum_{a=1}^k \bar{w}_a (\mu_a - y_a)^2 
    & \text{subject to}
    & & \mu_{o_k} \leq \mu_{o_{k-1}} \leq ... \leq \mu_{o_1}
    \end{aligned}
    \end{equation*}
    solves the constrained MLE problem in \eqref{eq:mu_est}. A proof is given in Appendix~\ref{app:isotonic}.
\end{thmprop}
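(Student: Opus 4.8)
The plan is to show that, viewed as a function of $\bmu$, the constrained MLE objective $-\ell(\cD_T\mid\bmu)$ in \eqref{eq:mu_est} differs from the weighted isotonic objective $\sum_{a=1}^k \bar w_a(\mu_a-y_a)^2$ only by an additive constant, and that the two feasible sets coincide. Equality of the two $\argmin$ problems is then immediate, and the claim that off-the-shelf isotonic-regression solvers apply follows because the constraint graph is a total order.

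Concretely, I would proceed as follows. First, group the $T$ summands of $-\ell(\cD_T\mid\bmu)\propto \sigma^{-2}\sum_{t=1}^T (r_t-\mu_{a_t})^2$ by the pulled arm: since $\mu_{a_t}$ depends on $t$ only through $a_t$, we have $\sum_{t=1}^T(r_t-\mu_{a_t})^2=\sum_{a=1}^k\sum_{t:\,a_t=a}(r_t-\mu_a)^2$. Next, complete the square within each block: for $a$ with $n_a>0$, expanding and using $\sum_{t:\,a_t=a}r_t=n_a y_a$ gives $\sum_{t:\,a_t=a}(r_t-\mu_a)^2=n_a(\mu_a-y_a)^2+\big(\sum_{t:\,a_t=a}r_t^2-n_a y_a^2\big)$, where the parenthesized term does not depend on $\bmu$. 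Summing over $a$ and dividing by $\sigma^2$ yields $-\ell(\cD_T\mid\bmu)=\sum_{a=1}^k\bar w_a(\mu_a-y_a)^2+C$ with $\bar w_a=n_a/\sigma^2$ and $C$ independent of $\bmu$. Finally, identify the feasible set: by definition $\cH_z=\{\bmu:\mu_a\ge\mu_{a'}\text{ whenever }a\succeq_z a'\}$, which, writing $O_z=(o_1,\dots,o_k)$, is exactly $\{\bmu:\mu_{o_k}\le\mu_{o_{k-1}}\le\cdots\le\mu_{o_1}\}$. Since the two objectives agree up to an additive constant over this common feasible set, their minimizers coincide, so $\hbmu_z$ solves the stated problem. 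Because the order constraints form a chain, this is a one-dimensional weighted isotonic regression, solved exactly by the pool-adjacent-violators algorithm \citep{barlow1972isotonic}, so any standard isotonic-regression routine applies.

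I do not expect a genuine obstacle here; the only point requiring care is arms with $n_a=0$, for which $y_a$ is undefined and $\bar w_a=0$. Such arms contribute no factor to $\cL(\cD_T\mid z)$ (the product ranges only over pulled arms), so every value of $\mu_a$ consistent with $O_z$ is an equally valid minimizer. I would therefore note that $\hbmu_z$ is unique on the pulled coordinates and on the remaining coordinates is fixed by the usual isotonic convention (e.g., the interpolation produced by PAVA), a choice that does not affect the likelihood used in \eqref{eq:appr_posterior}. Everything else is routine algebra.
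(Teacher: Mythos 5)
Your proposal is correct and follows essentially the same route as the paper's proof in Appendix~\ref{app:isotonic}: both show that the weighted isotonic objective equals the constrained negative log-likelihood up to an additive constant independent of $\bmu$, so the minimizers over the common feasible set $\cH_z$ coincide. Your treatment is if anything slightly more careful, since you explicitly verify that the feasible sets match and handle the $n_a=0$ edge case, neither of which the paper's proof addresses.
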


\subsubsection{On the scaling of regret with latent states, actions and constraints}
\label{sec:scaling}

In Section~\ref{sec:regret_min}, we gave a simple procedure that achieves $O(\sqrt{\min(k,m)T})$ regret on the latent preference bandit problem by restricting the action set to the $\leq m$ arms that are optimal in at least one latent state. 
Although we don't prove this formally, we should expect a similar regret bound to be provable for \lpbts{}, by appealing to general results~\citep{geyer1994asymptotics} which state that constrained MLE has asymptotic variance that is no worse than unconstrained MLE (as used in many MAB algorithms), provided that the true parameter lies in the constraint set, which it does here. 
Indeed, in our experiments, both \lpbts{} and \mts{}~\citep{hong2020latent} vastly outperform an uninformed MAB algorithm, even when restricting the number of arms as described above. This gap is not predicted by worst-case upper bounds for \mts{} and MAB algorithms. 

The empirical performance of \lpbts{} is affected by the nature and number of order constraints. For $k$ arms, there are $k!$ possible orderings. Thus, even when $m$ grows linearly with $k$, the space of possible parameter vectors is reduced by an $O(k!)$ factor compared to an unconstrained problem. The effect of this is largest when many of the constraints in \eqref{eq:mu_est} are active for states $z'$ that are not the ground-truth state $z^*$, and their MLE estimate projects the empirical reward means onto $\cH_{z'}$. The more active constraints, the larger the projection and the smaller the likelihood of observed rewards under $z'$. 

As a toy example, assuming that orderings $O_z$ are randomly selected from all possible permutations of the $k$ actions without replacement, the probability that two such orderings differ in only two positions is $\binom{k}{2}/(k! - 1)$ (see Appendix~\ref{app:similarity_orderings}). 
For large $k$, this probability is vanishingly small. For example, with $k = 10$, this probability evaluates to approximately $\frac{45}{3,628,799} \approx 1.24 \times 10^{-5}$. Given that $m$ is typically much smaller than $k!$, the ground-truth state will stand out even more as $k$ grows. This explains why subsampling the action set as in Section~\ref{sec:regret_min} may achieve surprisingly low worst-case regret but performs substantially worse empirically than algorithms exploiting latent structure. 

%
%

%
%
\section{Experiments}
\label{sec:experiments}
We compare our algorithm \lpbTS{}, on both synthetic and realistic tasks, to: i) standard MAB \textbf{Thompson Sampling (TS)}~\citep{Thompson1933, russo2018tutorial} that is oblivious to the latent state structure, initialized with Gaussian priors and using the ground-truth variance of the reward, ii) \textbf{TS, top arm subset} (see Proposition \ref{prop:regret_upper}), and iii) \textbf{\mts{}} ~\citep{hong2020latent}, Thompson Sampling with a perfect latent variable model, including the exact reward mean vectors for all arms and latent states.

\subsection{Synthetic Experiments}
We first consider a synthetic bandit environment designed to simulate a well-specified LPB setting. The environment is parameterized by fixable variable numbers of $k$ arms and $m$ latent states. For given $k$ and $m$, for each latent state $z \in [m]$, a random preference ordering (permutation) $O_z = (o_{z,1}, ..., o_{z,k})$ of the arms is generated without replacement. Rewards are Gaussian distributed, with variance $\sigma^2 = 1$. The mean reward for arm $a$ in state $z$, $\mu_{a,z}$ is based on its position in $O_z$, ensuring a strictly decreasing sequence of means along the permutation, with the optimal arm in each state, $o_{z,1}$ assigned mean $\mu_{o_{z,1},z} \sim \mathcal{U}(C + \Delta k, C + \Delta k + 1)$. We use $C=9$, and $\Delta = 0.2$. For subsequent arms $o_{z,j}$,  $j \in \{2, \ldots, k\}$, the mean is defined recursively as $\mu_{o_{z,j},z} = \mu_{o_{z,j-1},z} - \Delta - U$, where $U \sim \mathcal{U}(0, \epsilon)$ with $\epsilon = 0.05$. We also allow for reward scales to vary among instances in the same latent state, by having a larger draw interval for the mean reward of the optimal arm: $\bar{\mu}_{o_{z,1},z} \sim \mathcal{U}(\bar{C} + \Delta k,~\bar{C}+\Delta k + \gamma k)$ with $\bar{C} =6 $ and $\gamma=0.4$, and subsequent arm means defined recursively as before. We conduct experiments with $N=50$ independent samples of instance means per latent state, each having $T=200$ rounds. We report all the configurations of $k, m, T~\text{and}~N$ in the results presented. We use cumulative regret over $T$, and average regret at final round $T$ as the error metrics. We report these as averages over $m \times N$ independent instances, and their corresponding standard deviation as errors. We also report the average active constraints in \lpbTS{}, reported at the final round T for each value of the varied parameter, with standard deviation as the error.

\subsection{MovieLens Experiments}
For a realistic personalisation setting, we construct a task based on the MovieLens~\citep{harper2015movielens} datasets where actions represent movie choices, rewards are ratings of movies, and latent states are groups of users. See Appendix~\ref{app:movielens} for a full description. First, users are clustered into latent states based on their real-world sparse movie preference ratings. Then, preference orderings for all clusters are learned using Bradley-Terry Models (BTMs)~\citep{bradley1952rank} and used to define reward models assigning an average rating in $\mu_{a,z} \in [1, 5]$ to each movie. Personalized rating scales for each instance are created by drawing a random rating interval defined by the smallest allowable interval length, $\zeta=1.5$. For each experiment, we sample 100 random users and do 200 rounds of movie ratings per user, where at each round, 300 genre-diverse movies are sampled from the set of available movies to form the active action set.
\begin{figure}
    \centering
    \includegraphics[width=1\textwidth]{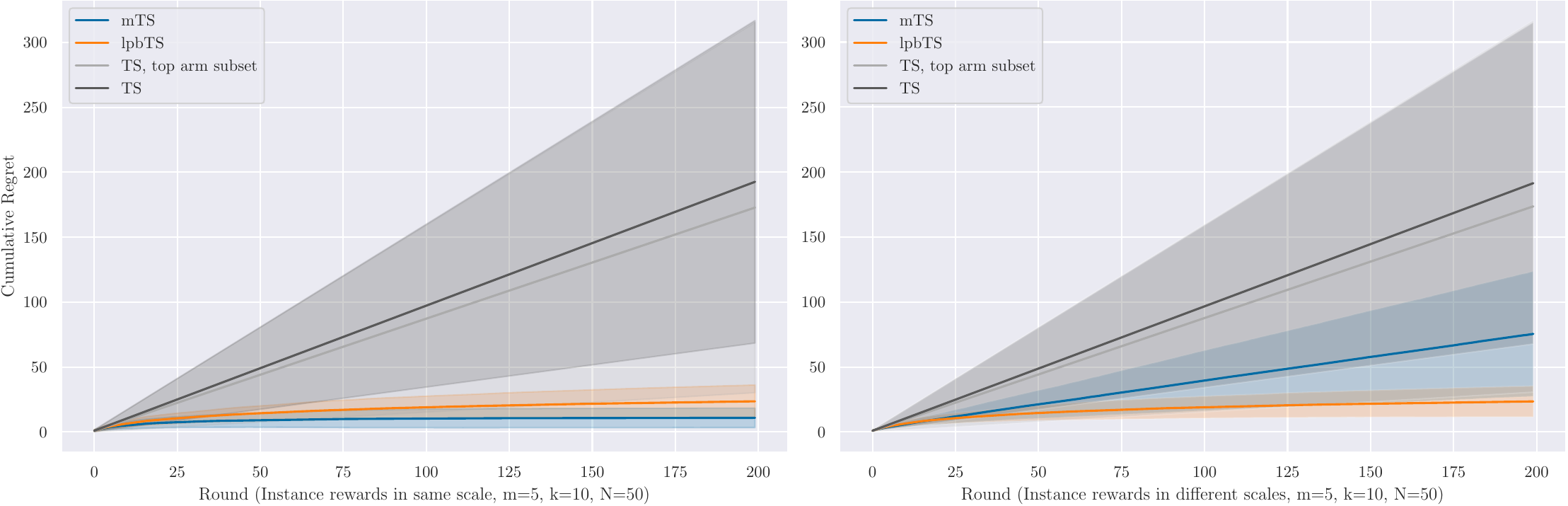}
    \caption{Synthetic experiment, cumulative regret compared to baselines ($m=5, K=10, N=50$). \lpbTS{~\textbf{(Ours)}} is comparable to latent bandit baselines when instance rewards lie in the same scale (\textbf{\textit{Left}}) and outperforms baselines when instance rewards in different reward scales (\textbf{\textit{Right}})  }.
    \label{fig:cumulative_rewards_all}
\end{figure}
We evaluate how well \lpbts{} performs compared to baselines i) when movie ratings for users of the same latent state are in the same absolute scale, ii) when individual ratings could vary in scale for users, and iii) when using a \emph{learned} latent preference ordering model $\{(z, \hat{O}_z)\}$ (see Appendix~\ref{app:movielens}) rather than the ground-truth model, to study the effects of possible errors in model fit and latent state recovery. Cumulative regret results are reported as averages and standard deviations over user instances. Additionally, we report the average of movie ratings achieved by the algorithms, standardized with $z$-score standardization \textit{per user} to ensure consistency when rewards can vary in scale, together with standard errors.

\subsection{Results}

\paragraph{LPB structure benefits exploration, and is robust}

In Figure~\ref{fig:cumulative_rewards_all}~(\textit{Left}), we see that \lpbTS~is comparable in performance to \mts{}, in the comparable convergence of the cumulative regret, when instance reward means have a fixed reward scale in the latent states. However, \mts{} outperforms \lpbTS~as it converges quicker, with a lower variance because of it's knowledge of the \textit{true} latent reward means. \lpbTS~has to \textit{estimate} these latent means with noisy rewards. Adding the ordering constraints $O$ is vastly beneficial compared to no stucture as in TS. Unsurprisingly, TS, top arm subset is also poor, because it does not actually exploit structure despite achieving the upper bound. In Figure~\ref{fig:cumulative_rewards_all} (\textit{Right}), we see the benefit of using a more general latent structure $O$ compared to latent mean vector of rewards: Cumulative regret for \lpbTS~converges quicker than that for \mts{} with a visibly lower variance. This is because the latent model comprising mean vectors is misspecified when absolute reward scales can vary for different latent state instances.

\begin{figure}[t]
    \centering
    \begin{subfigure}[b]{0.85\textwidth}
        \centering
        \includegraphics[width=\textwidth]{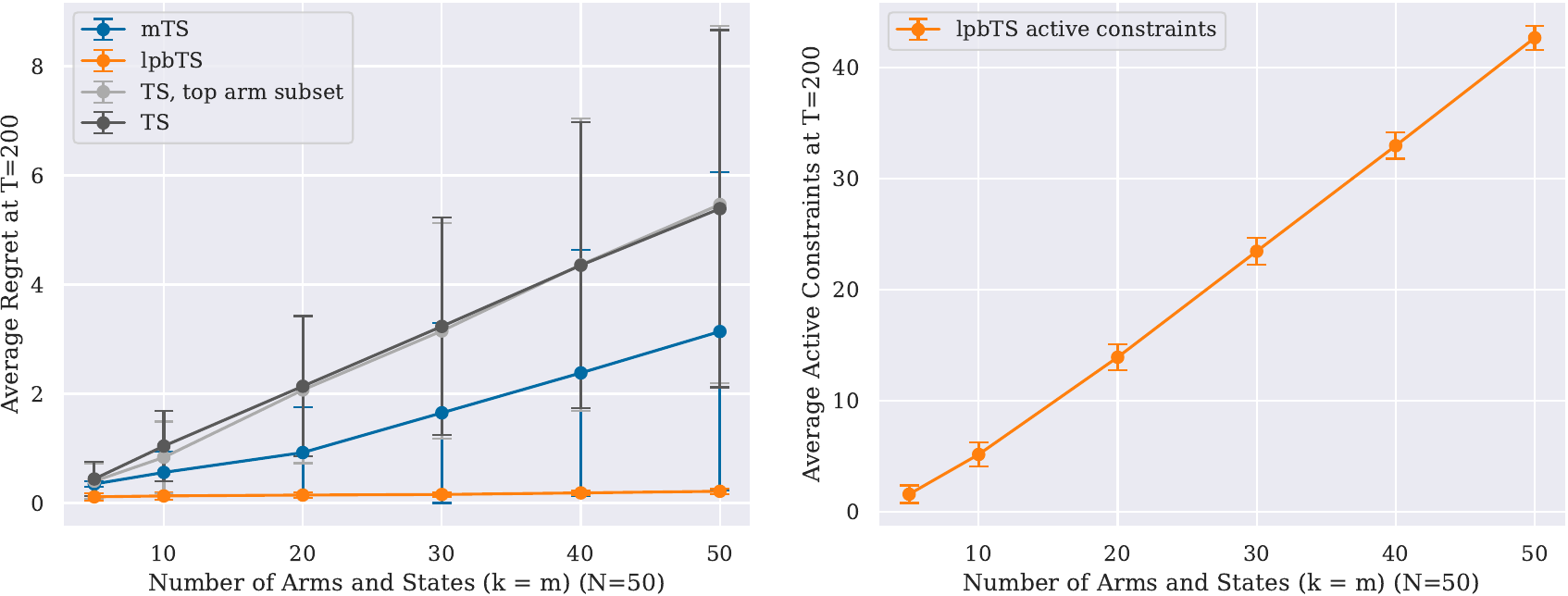}
        \caption{ Varying the number of arms $k$, and $m=k~(N=50, T=200,  k\in[5, 10, 20, 30, 40, 50]$). \textbf{\textit{Left:}}  Observed average regret at $T=200$. \textbf{\textit{Right:}} Observed average active constraints.}
        \label{fig:active_k_m}
    \end{subfigure}
    \begin{subfigure}[b]{0.85\textwidth}
        \centering
        \includegraphics[width=\textwidth]{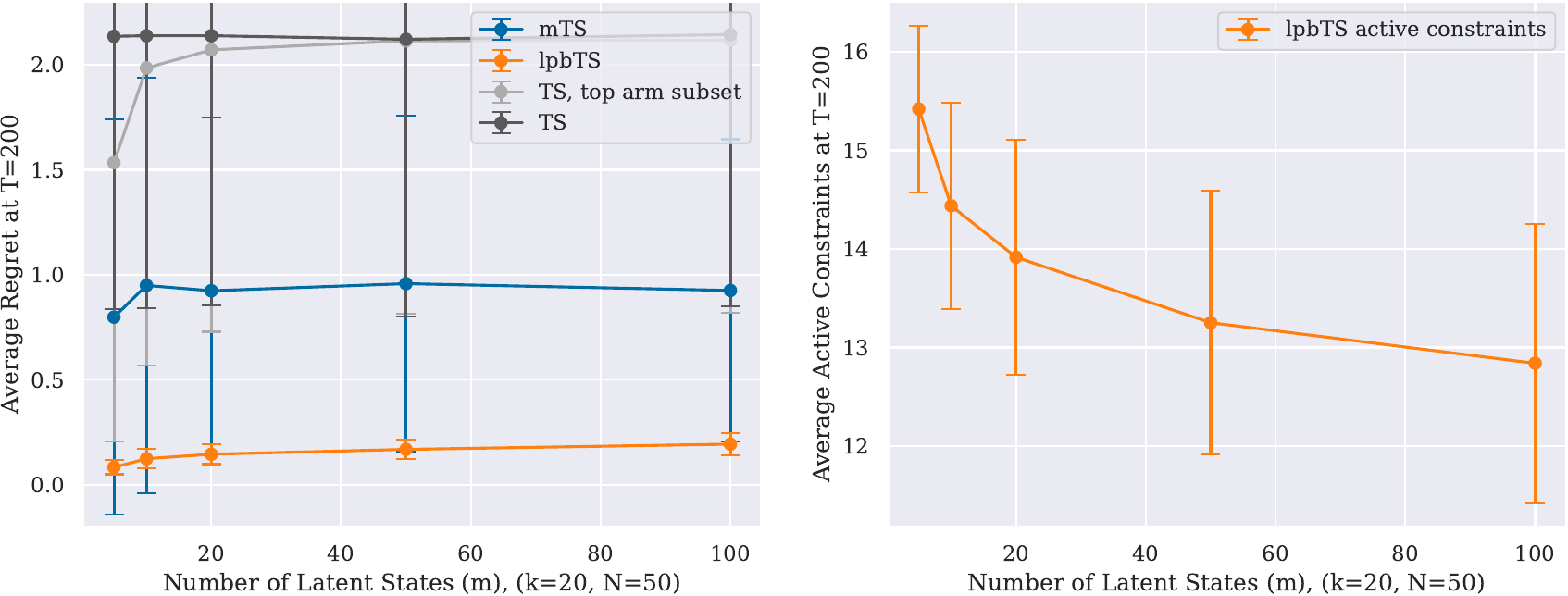}
        \caption{Varying the number of latent states $m$ ($k=20, N=50,  T=200, m\in[5, 10, 20, 50, 100]$). \textbf{\textit{Left:}} Observed average regret at $T=200$ \textbf{\textit{Right:}} Observed average active constraints at $T=200$. }
        \label{fig:varying_m}
    \end{subfigure}
    \hfill
    \caption{Synthetic experiment, instance rewards in different scales.  When $m=k$ and $k$ grows, the true latent state stands out more as its closest neighbors become more different with high probability. When $m$ increases with $k$ fixed, it is possible to assign more distinct means and orderings, reducing the structural benefit of latent bandits (\lpbTS{}, \mts{}) over  unconstrained methods (TS). 
    }
    \label{fig:varying_combined1}
\end{figure}

\paragraph{LPB characteristics become explainable from observed active constraints} In Figure~\ref{fig:active_k_m}, we observe that as $k$ and $m=k$ increase, the average regret for \mts{} also increases while that of \lpbTS~is fairly constant. This is because instance rewards have different scales, and the draw interval for the mean reward of the optimal arm grows with $\Delta k$. If the interval was fixed, \mts{} would still be worse than \lpbTS, but would not grow this way (e.g. see Figure~\ref{fig:varying_m}). We also see that the number of active constraints grow. This is because $m=O(k)$, but the number of possible permutations grows like $k!$, so the probability of having large differences between states grows when $m=k$ and $k$ grows. This is not predicted by a $O(\sqrt{\min(k, m)T})$ bound since $m=k$. It is explained by the fact that the true latent state stands out more 
with high probability, and the empirical isotonic means $\hat{\mu}_z$ become less likely to align with the neighbor states (the most confusable states) relative to the true state, resulting in a higher number of active constraints. In Figure~\ref{fig:varying_m}, we see that when the number of latent states is increased with $k$ fixed, the average regret increases in \mts{}, \lpbTS, and TS, top arm subset until $m=k$ at which point it flattens. This can be explained by the worst-case $O(\sqrt{\min(k, m)T})$ bound.
We defer additional empirical results and discussions aligning with these insights to Appendix~\ref{app:additional_experiments}, in Figures~\ref{fig:app_active_D}-\ref{fig:app_active_sigma} for varying $\Delta, k (\text{with m fixed), and}~\sigma$, and Figures~\ref{fig:app_active_k_m}-\ref{fig:app_active_k_s} for instance rewards in the same scale.

\paragraph{MovieLens results are aligned}
Results from MovieLens experiments are consistent with the theory and synthetic results, where \lpbTS{} is comparable (Figure~\ref{fig:movielens_same_scale_20m}, 20M) to \mts{} (which uses the oracle ground truth model) when ratings are in the same scale for the latent states, and \mts{} is worse (Figure~\ref{fig:movielens_diff_scale_20m}, 20M) when ratings are in different scales. Further, \lpbTS{} with an offline recovered model is competitive compared to \lpbts, oracle ground-truth model, albeit biased especially when user ratings are in different scales. We re-iterate that recovering latent models is a key ingredient for latent bandits, and we empirically demonstrate a method to recover them for Latent Preference Bandits with our two-stage recovery, with many actions. Results for 1M and 32M datasets are left to Appendix~\ref{app:additional_experiments}.

\begin{figure}[t]
    \centering
    \begin{subfigure}[b]{0.85\textwidth}
        \centering
        \includegraphics[width=\textwidth]{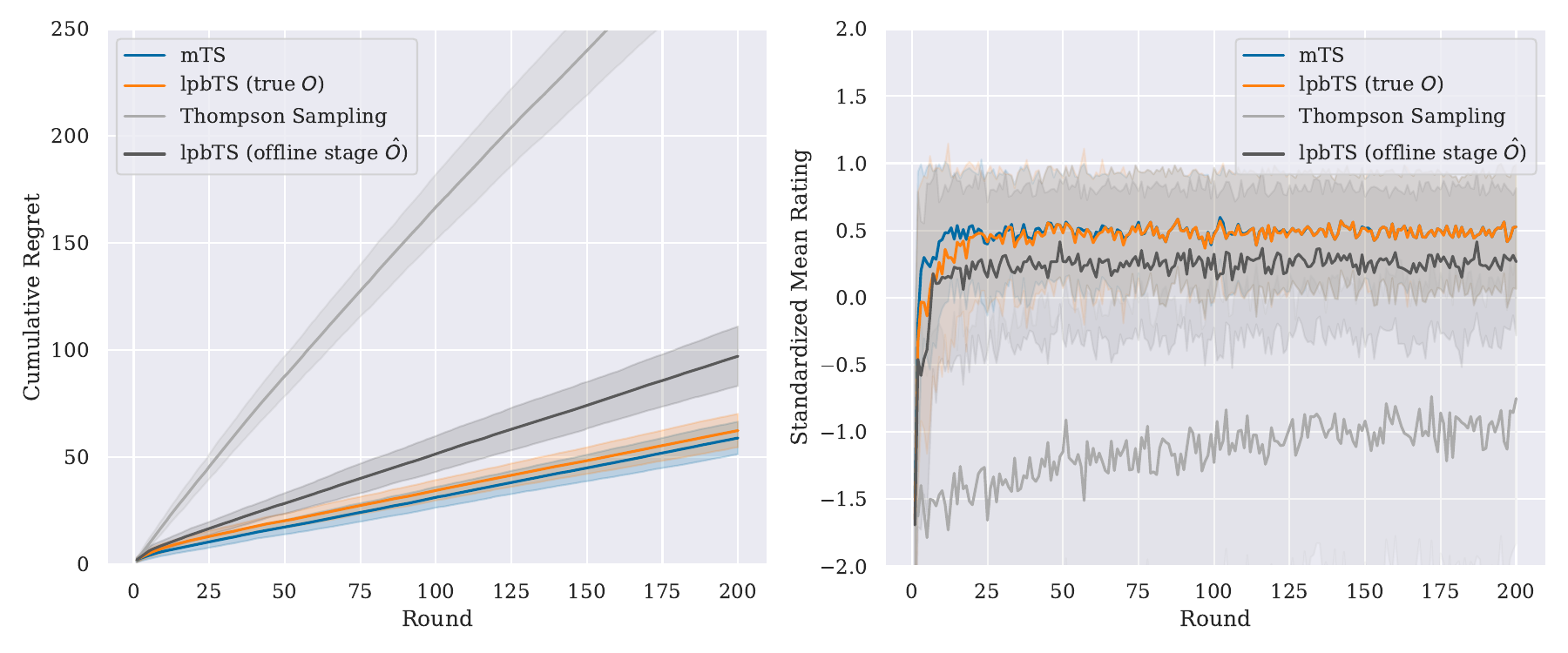}
        \caption{Movie ratings in the same scale for users in a latent state}
        \label{fig:movielens_same_scale_20m}
    \end{subfigure}
    \hfill
    \begin{subfigure}[b]{0.85\textwidth}
        \centering
        \includegraphics[width=\textwidth]{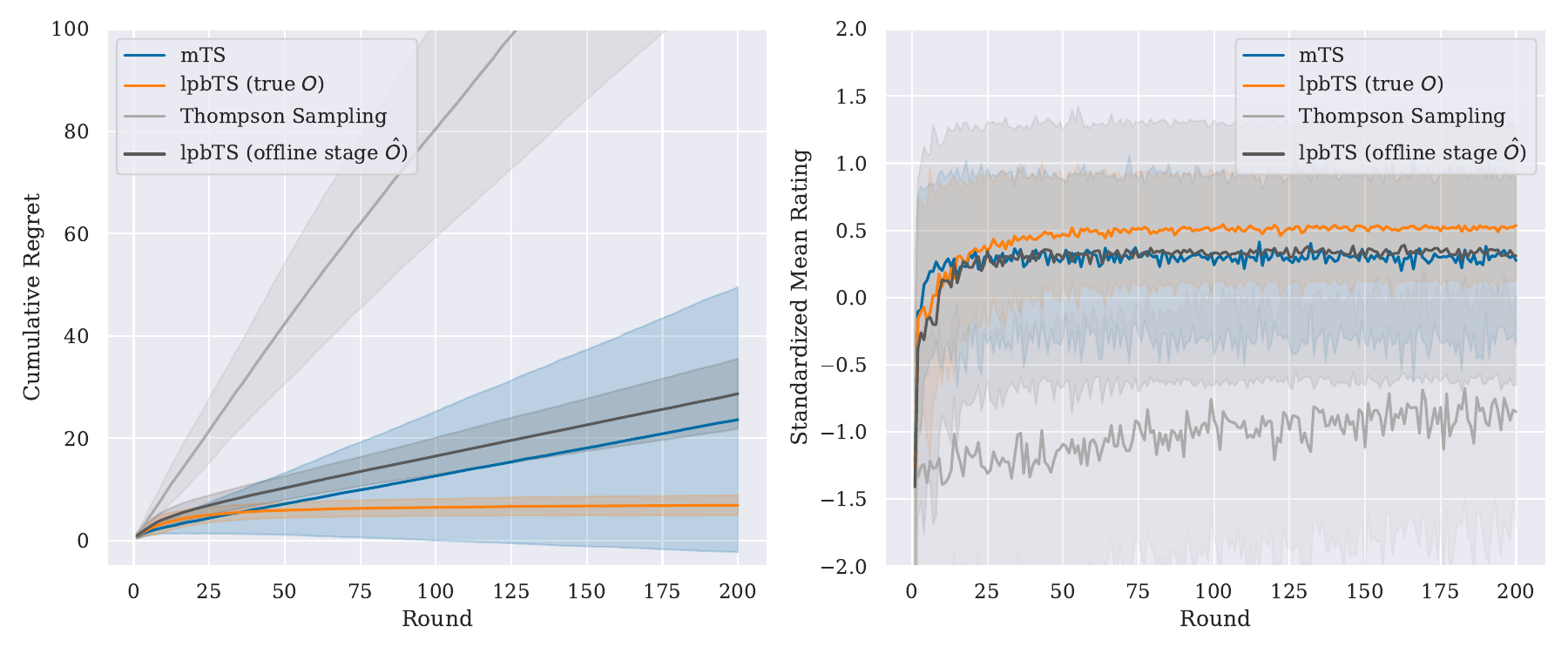}
        \caption{Movie ratings in different scales for users in a latent state}
        \label{fig:movielens_diff_scale_20m}
    \end{subfigure}
    \caption{MovieLens Experiment, 20M Dataset. Results match theory: \lpbTS{~\textbf{(Ours)}} is comparable to mTS in (a), outperforms in (b), and the two-stage recovery of $O$ is empirically validated. }
    \label{fig:movielens_combined}
\end{figure}

\section{Conclusion}
We have proposed Latent Preference Bandits (LPB), a novel bandit problem where instances share structure based on a latent variable that determines the preference ordering of actions but not their rewards. We design the algorithm \lpbTS{} for regret minimization in this setting by sampling from an approximate posterior of the latent state, constrained by the set of possible orderings. We demonstrate that, despite using less information, \lpbTS{} is competitive with latent bandit algorithms that have full knowledge of the reward distribution of each arm when all instances of the same state have the same distribution, and outperforms them when individual rating scales differ between instances who share the same preference ordering. The benefit over uninformed bandit algorithms grows when the number of latent states (orderings) is small relative to the number of arms. A limitation of our algorithm is that it requires a model of the preference ordering of all latent states. However, in our experiments on movie ratings, we find that a learned model performs comparably to the ground truth. 

\bibliographystyle{plainnat}
\bibliography{main}

\begin{thebibliography}{34}
\providecommand{\natexlab}[1]{#1}
\providecommand{\url}[1]{\texttt{#1}}
\expandafter\ifx\csname urlstyle\endcsname\relax
  \providecommand{\doi}[1]{doi: #1}\else
  \providecommand{\doi}{doi: \begingroup \urlstyle{rm}\Url}\fi

\bibitem[Agrawal et~al.(1989)Agrawal, Teneketzis, and Anantharam]{agrawal1989asymptotically}
Rajeev Agrawal, Demosthenis Teneketzis, and Venkatachalam Anantharam.
\newblock Asymptotically efficient adaptive allocation schemes for controlled i. i. d. processes: Finite parameter space.
\newblock \emph{IEEE Transactions on Automatic Control}, 34\penalty0 (3):\penalty0 258--267, 1989.

\bibitem[Agrawal and Goyal(2012)]{agrawal2012analysis}
Shipra Agrawal and Navin Goyal.
\newblock Analysis of thompson sampling for the multi-armed bandit problem.
\newblock In \emph{Conference on learning theory}, pages 39--1. JMLR Workshop and Conference Proceedings, 2012.

\bibitem[Agrawal and Goyal(2013)]{agrawal2013thompson}
Shipra Agrawal and Navin Goyal.
\newblock Thompson sampling for contextual bandits with linear payoffs.
\newblock In \emph{International conference on machine learning}, pages 127--135. PMLR, 2013.

\bibitem[Ailon et~al.(2014)Ailon, Karnin, and Joachims]{ailon2014reducing}
Nir Ailon, Zohar Karnin, and Thorsten Joachims.
\newblock Reducing dueling bandits to cardinal bandits.
\newblock In \emph{International Conference on Machine Learning}, pages 856--864. PMLR, 2014.

\bibitem[Atan et~al.(2018)Atan, Tekin, and van~der Schaar]{atan2018global}
Onur Atan, Cem Tekin, and Mihaela van~der Schaar.
\newblock Global bandits.
\newblock \emph{IEEE transactions on neural networks and learning systems}, 29\penalty0 (12):\penalty0 5798--5811, 2018.

\bibitem[Balc{\i}o{\u{g}}lu et~al.(2024)Balc{\i}o{\u{g}}lu, Carlsson, and Johansson]{balciouglu2024identifiable}
Ahmet~Zahid Balc{\i}o{\u{g}}lu, Emil Carlsson, and Fredrik~D Johansson.
\newblock Identifiable latent bandits: Combining observational data and exploration for personalized healthcare.
\newblock \emph{arXiv preprint arXiv:2407.16239}, 2024.

\bibitem[Barlow and Brunk(1972)]{barlow1972isotonic}
Richard~E Barlow and Hugh~D Brunk.
\newblock The isotonic regression problem and its dual.
\newblock \emph{Journal of the American Statistical Association}, 67\penalty0 (337):\penalty0 140--147, 1972.

\bibitem[Bengs et~al.(2021)Bengs, Busa-Fekete, El~Mesaoudi-Paul, and H{\"u}llermeier]{bengs2021preference}
Viktor Bengs, R{\'o}bert Busa-Fekete, Adil El~Mesaoudi-Paul, and Eyke H{\"u}llermeier.
\newblock Preference-based online learning with dueling bandits: A survey.
\newblock \emph{Journal of Machine Learning Research}, 22\penalty0 (7):\penalty0 1--108, 2021.

\bibitem[Bergstr{\"o}m et~al.(2024)Bergstr{\"o}m, Carlsson, Dubhashi, and Johansson]{herman2024}
Herman Bergstr{\"o}m, Emil Carlsson, Devdatt Dubhashi, and Fredrik~D. Johansson.
\newblock Active preference learning for ordering items in- and out-of-sample.
\newblock In \emph{The Thirty-eighth Annual Conference on Neural Information Processing Systems}, 2024.
\newblock URL \url{https://openreview.net/forum?id=PSLH5q7PFo}.

\bibitem[Bouneffouf et~al.(2020)Bouneffouf, Rish, and Aggarwal]{Bouneffouf2020}
Djallel Bouneffouf, Irina Rish, and Charu Aggarwal.
\newblock Survey on applications of multi-armed and contextual bandits.
\newblock In \emph{2020 IEEE Congress on Evolutionary Computation (CEC)}, pages 1--8, 2020.
\newblock \doi{10.1109/CEC48606.2020.9185782}.

\bibitem[Bradley and Terry(1952)]{bradley1952rank}
Ralph~Allan Bradley and Milton~E Terry.
\newblock Rank analysis of incomplete block designs: I. the method of paired comparisons.
\newblock \emph{Biometrika}, 39\penalty0 (3/4):\penalty0 324--345, 1952.

\bibitem[Chapelle and Li(2011)]{chapelle2011empirical}
Olivier Chapelle and Lihong Li.
\newblock An empirical evaluation of thompson sampling.
\newblock \emph{Advances in neural information processing systems}, 24, 2011.

\bibitem[Chu et~al.(2011)Chu, Li, Reyzin, and Schapire]{chu2011contextual}
Wei Chu, Lihong Li, Lev Reyzin, and Robert Schapire.
\newblock Contextual bandits with linear payoff functions.
\newblock In \emph{Proceedings of the Fourteenth International Conference on Artificial Intelligence and Statistics}, pages 208--214. JMLR Workshop and Conference Proceedings, 2011.

\bibitem[Geyer(1994)]{geyer1994asymptotics}
Charles~J Geyer.
\newblock On the asymptotics of constrained m-estimation.
\newblock \emph{The Annals of statistics}, pages 1993--2010, 1994.

\bibitem[Harper and Konstan(2015)]{harper2015movielens}
F~Maxwell Harper and Joseph~A Konstan.
\newblock The movielens datasets: History and context.
\newblock \emph{Acm transactions on interactive intelligent systems (tiis)}, 5\penalty0 (4):\penalty0 1--19, 2015.

\bibitem[Hong et~al.(2020{\natexlab{a}})Hong, Kveton, Zaheer, Chow, Ahmed, and Boutilier]{hong2020latent}
Joey Hong, Branislav Kveton, Manzil Zaheer, Yinlam Chow, Amr Ahmed, and Craig Boutilier.
\newblock Latent bandits revisited.
\newblock \emph{Advances in Neural Information Processing Systems}, 33:\penalty0 13423--13433, 2020{\natexlab{a}}.

\bibitem[Hong et~al.(2020{\natexlab{b}})Hong, Kveton, Zaheer, Chow, Ahmed, Ghavamzadeh, and Boutilier]{hong2020non}
Joey Hong, Branislav Kveton, Manzil Zaheer, Yinlam Chow, Amr Ahmed, Mohammad Ghavamzadeh, and Craig Boutilier.
\newblock Non-stationary latent bandits.
\newblock \emph{arXiv preprint arXiv:2012.00386}, 2020{\natexlab{b}}.

\bibitem[Jordan et~al.(1999)Jordan, Ghahramani, Jaakkola, and Saul]{jordan1999introduction}
Michael~I Jordan, Zoubin Ghahramani, Tommi~S Jaakkola, and Lawrence~K Saul.
\newblock An introduction to variational methods for graphical models.
\newblock \emph{Machine learning}, 37:\penalty0 183--233, 1999.

\bibitem[Kinyanjui et~al.(2023)Kinyanjui, Carlsson, and Johansson]{mwai2023fast}
Newton~Mwai Kinyanjui, Emil Carlsson, and Fredrik~D. Johansson.
\newblock Fast treatment personalization with latent bandits in fixed-confidence pure exploration.
\newblock \emph{Transactions on Machine Learning Research}, 2023.
\newblock ISSN 2835-8856.
\newblock URL \url{https://openreview.net/forum?id=NNRIGE8bvF}.
\newblock Expert Certification.

\bibitem[Lattimore and Szepesv{\'a}ri(2020)]{lattimore2020bandit}
Tor Lattimore and Csaba Szepesv{\'a}ri.
\newblock \emph{Bandit algorithms}.
\newblock Cambridge University Press, 2020.

\bibitem[Li et~al.(2010)Li, Chu, Langford, and Schapire]{li2010contextual}
Lihong Li, Wei Chu, John Langford, and Robert~E Schapire.
\newblock A contextual-bandit approach to personalized news article recommendation.
\newblock In \emph{Proceedings of the 19th international conference on World wide web}, pages 661--670, 2010.

\bibitem[Maillard and Mannor(2014)]{maillard2014latent}
Odalric-Ambrym Maillard and Shie Mannor.
\newblock Latent bandits.
\newblock In \emph{International Conference on Machine Learning}, pages 136--144. PMLR, 2014.

\bibitem[Nelson et~al.(2022)Nelson, Bhattacharjya, Gao, Liu, Bouneffouf, and Poupart]{nelson2022linearizing}
Elliot Nelson, Debarun Bhattacharjya, Tian Gao, Miao Liu, Djallel Bouneffouf, and Pascal Poupart.
\newblock Linearizing contextual bandits with latent state dynamics.
\newblock In \emph{Uncertainty in Artificial Intelligence}, pages 1477--1487. PMLR, 2022.

\bibitem[O'Brien et~al.(2022)O'Brien, Wu, Zhai, Guo, Shi, and Hunt]{o2022should}
Conor O'Brien, Huasen Wu, Shaodan Zhai, Dalin Guo, Wenzhe Shi, and Jonathan~J Hunt.
\newblock Should i send this notification? optimizing push notifications decision making by modeling the future.
\newblock \emph{arXiv preprint arXiv:2202.08812}, 2022.

\bibitem[Pal et~al.(2023)Pal, Sai~Suggala, Shanmugam, and Jain]{pmlr-v206-pal23a}
Soumyabrata Pal, Arun Sai~Suggala, Karthikeyan Shanmugam, and Prateek Jain.
\newblock Optimal algorithms for latent bandits with cluster structure.
\newblock In Francisco Ruiz, Jennifer Dy, and Jan-Willem van~de Meent, editors, \emph{Proceedings of The 26th International Conference on Artificial Intelligence and Statistics}, volume 206 of \emph{Proceedings of Machine Learning Research}, pages 7540--7577. PMLR, 25--27 Apr 2023.

\bibitem[Russo et~al.(2024)Russo, Metelli, and Restelli]{russoswitching}
Alessio Russo, Alberto~Maria Metelli, and Marcello Restelli.
\newblock Switching latent bandits.
\newblock \emph{Transactions on Machine Learning Research}, 2024.

\bibitem[Russo et~al.(2018)Russo, Van~Roy, Kazerouni, Osband, Wen, et~al.]{russo2018tutorial}
Daniel~J Russo, Benjamin Van~Roy, Abbas Kazerouni, Ian Osband, Zheng Wen, et~al.
\newblock A tutorial on thompson sampling.
\newblock \emph{Foundations and Trends{\textregistered} in Machine Learning}, 11\penalty0 (1):\penalty0 1--96, 2018.

\bibitem[Sui et~al.(2018)Sui, Zoghi, Hofmann, and Yue]{sui2018advancements}
Yanan Sui, Masrour Zoghi, Katja Hofmann, and Yisong Yue.
\newblock Advancements in dueling bandits.
\newblock In \emph{IJCAI}, pages 5502--5510, 2018.

\bibitem[Thompson(1933{\natexlab{a}})]{Thompson1933}
William~R. Thompson.
\newblock On the likelihood that one unknown probability exceeds another in view of the evidence of two samples.
\newblock \emph{Biometrika}, 25\penalty0 (3/4):\penalty0 285--294, 1933{\natexlab{a}}.
\newblock ISSN 00063444.
\newblock URL \url{http://www.jstor.org/stable/2332286}.

\bibitem[Thompson(1933{\natexlab{b}})]{thompson1933likelihood}
William~R Thompson.
\newblock On the likelihood that one unknown probability exceeds another in view of the evidence of two samples.
\newblock \emph{Biometrika}, 25\penalty0 (3-4):\penalty0 285--294, 1933{\natexlab{b}}.

\bibitem[Vigneau et~al.(1999)Vigneau, Courcoux, and Semenou]{vigneau1999analysis}
E~Vigneau, Ph~Courcoux, and M~Semenou.
\newblock Analysis of ranked preference data using latent class models.
\newblock \emph{Food quality and preference}, 10\penalty0 (3):\penalty0 201--207, 1999.

\bibitem[Yancey and Settles(2020)]{yancey2020sleeping}
Kevin~P Yancey and Burr Settles.
\newblock A sleeping, recovering bandit algorithm for optimizing recurring notifications.
\newblock In \emph{Proceedings of the 26th ACM SIGKDD International Conference on Knowledge Discovery \& Data Mining}, pages 3008--3016, 2020.

\bibitem[Yue and Joachims(2009)]{yue2009interactively}
Yisong Yue and Thorsten Joachims.
\newblock Interactively optimizing information retrieval systems as a dueling bandits problem.
\newblock In \emph{Proceedings of the 26th Annual International Conference on Machine Learning}, pages 1201--1208, 2009.

\bibitem[Zhou(2015)]{zhou2015survey}
Li~Zhou.
\newblock A survey on contextual multi-armed bandits.
\newblock \emph{arXiv preprint arXiv:1508.03326}, 2015.

\end{thebibliography}

\clearpage
\appendix

\section{Notation}
A list of common notations is given in Table~\ref{tab:notation}. Generally, capital Roman letters denote random variables and lower-case Roman letters denote constants or observations or random variables. The sequence $O_z$ is an exception since it is not random. Caligraphic Roman letters denote sets. 
\begin{table}[H]
    \centering
    \caption{Common notation}
    \label{tab:notation}   
    \vspace{0.5em}
    \begin{tabular}{ll}
    \toprule
        $s, t$ & Time indices \\
        $T$ & Time horizon \\
        $\cA$ & Set of available actions \\
        $a$ & A single action in $\cA$. $a_t$ is the action at time $t$\\
        $\cZ$ & Set of latent states \\
        $Z$ & Latent variable on $\cZ$ \\
        $z$ & A single latent state in $\cZ$ \\
        $R_a$ & Stochastic reward for action $a$ \\
        $r_t$ & Observation of reward at time $t$ following action $a_t$ \\
        $\mu_a$ & Expected reward under action $a$, $\mu_a = \E[R_a]$ \\
        $\sigma_a$ & Standard deviation of reward under action $a$, $\sigma^2_a = \V[R_a]$ \\
        $\bmu$ & Vector of reward means for all actions \\
        $a^*$ & Action with the highest expected reward  \\
        $\mu^*$ & Optimal reward, reward of optimal action \\
        $\mu_{a,z}$ & Expected reward under action $a$ in latent state $z$ \\
        $\sigma_a$ & Standard deviation of reward under action $a$ in latent state $z$\\
        $\bmu_z$ & Vector of reward means in latent state $z$ for all actions \\
        $\cH_z$ & Set of valid reward means in latent state $z$, $\bmu_z \in \cH_z$ \\
        $\cH$ & Set of globally valid reward means, $\bmu \in \cH = \cup_{z \in \cZ} \cH_z$\\
        $a_z^*$ & Action with the highest expected reward in latent state $z$ \\
        $\mu_z^*$ & Optimal reward for any action in latent state $z$ \\
        $\hmu_a, \hbmu, \hmu_{a,z}, \hbmu_z$ & Estimates of reward means corresponding to the above \\
        $\pi$ & Decision making policy to select $a_t$ \\
        $\reg(T)$ & Cumulative regret until horizon $T$ \\
        $\cM$ & Latent-variable model \\
        $O_z = (o_{z,1}, ..., o_{z,k})$ & Preference ordering of actions for latent state $z$ \\
        $i_{a,z}$ & Rank of action $a$ under latent state $z$ \\
        $\rho$ & Separation parameter for preference probability \\
        $\Delta$ & Separation parameter for reward means \\
        $\tilde{R}_t$ & Preference feedback at time $t$ \\
        $\cD_T$ & History of observations up to time $T$ \\
        $w_a$ & Weight parameter in isotonic regression \\
    \bottomrule
    \end{tabular}
\end{table}

\newpage
\section{Details on the MovieLens experiments}
\label{app:movielens}
For a real-world personalisation setting, we do experiments on the MovieLens~\citep{harper2015movielens} datasets, using the 1M, 20M and 32M versions. For each dataset, we first filter out movies that have less than 200 ratings, and users who have rated less than 200 movies. The resulting sizes are \textit{1M: }1,589 users, 1,132 movies, 498,677 ratings,\textit{ 20M:} 26,826 users 6,236 movies, 11,905,303 ratings, \textit{32M:} 42,197 users 8,777 movies, 19,832,758 ratings. We use $k= \text{\textit{\#movies}}$ after filtering.  We then obtain a train-test split on users with a split ratio of 0.5, and further split the test user set into a bandit inference set, and an \textit{``offline estimation''} set with a split ratio of 0.5. To get the ground-truth latent preference orderings $O$, a two-stage preference ordering recovery proceeds as follows: i) KMeans clustering on the training set, with $5$ clusters ($m=5$) on the sparse ratings. ii) For all users assigned to a cluster $z$, the cluster-specific preference orderings for all movies are obtained by fitting preference orderings on the users' movie ratings, with Bradley-Terry Models (BTMs)~\citep{bradley1952rank}, see  Appendix~\ref{app:latent_recovery}. 

Our two-stage BTM-based recovery yields a set of $\beta_{z,a}\in[0, 1], z\in[m], a\in[k]$ where $\beta_{z,a}$ define the preference strength for movie $a$ in cluster $z$ after sigmoid transformation, and $O_z$ is the indices of the decreasing sort of $\beta_z$. This two-stage preference ordering recovery is motivated by our observation that the recovery is comparable in performance to a BTM Expectation-Maximization (EM) approach for latent preference recovery, like the one proposed by~\citet{vigneau1999analysis}, see Appendix Figure~\ref{fig:comparison_recovery}. For a user at bandit time, their true $z$ (obtained, using the KMeans model fit in the two stage recovery) is used to get the environment reward. The environment reward for a user in latent state $z$, and for movie $k$ is obtained as $\mathcal{N}(\bar{\beta}_{z,k}, 0.5)$, where $\bar{\beta}_z$ is a scaling of $\beta_z$ to the MovieLens rating scale [1, 5] using $\bar{\beta}_{z} =1+4\times \frac{\beta_z - \min(\beta_z)}{\max(\beta_z) - \min(\beta_z)}$. We also allow for instance rewards to vary in scale to $\bar{\bar{\beta}}_z$ computed as: $\bar{\bar{\beta}}_z = \eta + (\upsilon - \eta) \times \frac{\bar{\beta}_{z} - 1}{5 - 1}$
where $\eta~\text{is their possible minimum rating with}~\eta \sim \mathcal{U}(1, 5 - \zeta)$, and $\upsilon$ is their possible maximum rating with $\upsilon \sim \mathcal{U}(\eta + \zeta, 5)$ and $\zeta$ is the allowable minimum rating interval with $\zeta = 1.5$. For varying individual scales, we only consider reward randomness due to the random rating perturbation.

For each experiment, we sample 100 users and do 200 rounds per user, where at each round, 300 genre-diverse movies are sampled from the set of available movies. In our experiments, we aim to compare how well our \lpbts{} algorithm performs compared to baselines for the following: i) Settings where movie ratings for users belonging to the same latent state are in the same absolute scale, and when individual ratings could vary in scale for users. ii) How well our algorithm performs with an offline recovered latent preference ordering model, that could have recovery errors. To recover this latent preference ordering model, we use the \textit{``offline estimation''} data, with uniformly collected movie rating logs between 200 and 300 per user, for both settings in i). This also provides insights into the quality of the two-stage latent preference order recovery.
Cumulative regret results are reported as averages over user runs, with standard deviation errors. The average movie ratings over users is also reported. To ensure consistency when rating scales vary, the average ratings are standardized with $z$-score standardization \textit{per user} and errors reported as standard errors of the mean.

\newpage
\section{Proof that isotonic regression solves the constrained MLE problem}
\label{app:isotonic}
\begin{thmprop}
    Let $n_a = \sum_{t=1}^T\mathds{1}[a_t = a]$ and define $\bar{w}_a = \frac{n_a}{\sigma^2}$. Next, let $O_z = (o_1, ..., o_k)$ be the preference ordering of latent state $z$. Then, the solution to the isotonic regression problem with outcomes $y_a = \frac{1}{n_a}\sum_{t : a_t=a}r_t$ and sample weights $\bar{w}_a$ 
    \begin{equation*}
    \begin{aligned}
    & \underset{\bmu \in \bbR^d}{\text{minimize}}
    & & \sum_{a=1}^k \bar{w}_a (\mu_a - y_a)^2 
    & \text{subject to}
    & & \mu_{o_k} \leq \mu_{o_{k-1}} \leq ... \leq \mu_{o_1}
    \end{aligned}
    \end{equation*}
    solves the constrained MLE problem in \eqref{eq:mu_est}.
\end{thmprop}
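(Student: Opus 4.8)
The plan is to show the two objectives differ only by a constant (in $\bmu$) and have the same feasible set, so they share a minimizer. The key tool is the within-group decomposition of the sum of squares. First I would group the terms of $-\ell(\cD_T \mid \bmu) \propto \sum_{t=1}^T (r_t - \mu_{a_t})^2/\sigma^2$ by the action played: $\sum_{t=1}^T (r_t - \mu_{a_t})^2 = \sum_{a=1}^k \sum_{t : a_t = a}(r_t - \mu_a)^2$. For each fixed $a$ with $n_a \geq 1$, write $r_t - \mu_a = (r_t - y_a) + (y_a - \mu_a)$ and expand; since $y_a = \frac{1}{n_a}\sum_{t : a_t = a} r_t$ is the group mean, the cross term $2(y_a - \mu_a)\sum_{t : a_t = a}(r_t - y_a)$ vanishes, leaving $\sum_{t : a_t = a}(r_t - \mu_a)^2 = \sum_{t : a_t = a}(r_t - y_a)^2 + n_a (y_a - \mu_a)^2$.

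Summing over $a$ and dividing by $\sigma^2$ then gives
\begin{equation*}
-\ell(\cD_T \mid \bmu) \;\propto\; \underbrace{\frac{1}{\sigma^2}\sum_{a=1}^k \sum_{t : a_t = a}(r_t - y_a)^2}_{\text{independent of }\bmu} \;+\; \sum_{a=1}^k \frac{n_a}{\sigma^2}(\mu_a - y_a)^2,
\end{equation*}
and the second sum is exactly $\sum_{a=1}^k \bar{w}_a (\mu_a - y_a)^2$ with $\bar{w}_a = n_a/\sigma^2$. Next I would note that the feasible set of \eqref{eq:mu_est} is $\cH_z = \{\bmu : \mu_{o_1} \geq \mu_{o_2} \geq \cdots \geq \mu_{o_k}\}$ by definition of the preference ordering $O_z$, which is precisely the monotonicity constraint $\mu_{o_k} \leq \mu_{o_{k-1}} \leq \cdots \leq \mu_{o_1}$ of the stated isotonic regression problem. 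Since the two objectives agree up to an additive constant and a positive multiplicative constant on the same feasible region, any minimizer of one is a minimizer of the other, which is the claim.

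I would close by addressing the one edge case: if $n_a = 0$ for some action $a$, then $y_a$ is undefined but $\bar{w}_a = 0$, so that coordinate simply drops out of the isotonic objective; in \eqref{eq:mu_est} it likewise does not appear in $-\ell$, and $\mu_a$ is pinned only by the order constraints relative to its pulled neighbors. In this case one restricts the isotonic regression to the sub-vector of pulled arms (inheriting the induced order) and extends the solution to the unpulled coordinates by any order-consistent interpolation; both problems have the same set of optimal $\mu_a$ values on pulled arms, and the extension is handled identically. I do not expect a genuine obstacle here — the argument is the standard Pythagorean/bias–variance split — the only care needed is to state the reduction to pulled arms cleanly and to make explicit that $\cH_z$ and the isotone cone in $O_z$ coincide.
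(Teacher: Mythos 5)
Your proposal is correct and follows essentially the same route as the paper: both arguments show the weighted isotonic objective and the constrained negative log-likelihood differ only by an additive constant (your Pythagorean within-group split is the same identity the paper obtains by directly expanding $\bar{w}_a(\mu_a-y_a)^2$ and substituting the definition of $y_a$), and both conclude the minimizers coincide on the common feasible set $\cH_z$. Your explicit treatment of the $n_a=0$ edge case and of the identification of $\cH_z$ with the isotone cone goes slightly beyond what the paper writes down, but does not change the argument.
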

\begin{proof}
    We show that the isotonic regression objective is equal to \eqref{eq:mu_est} up to a constant. We have 
    \begin{align*}
        \sum_{a=1}^k \bar{w}_a (\mu_a - y_a)^2 & = \sum_{a=1}^k \bar{w}_a (\mu_a^2 - 2y_a\mu_a + y_a^2) 
         = \sum_{a=1}^k \bar{w}_a (\frac{1}{n_a}\sum_{t:a_t=a}[\mu_a^2 - 2r_t\mu_a] + y_a^2) \\
        & = \sum_{a=1}^k \frac{1}{\sigma_a^2} (\sum_{t:a_t=a}[\mu_a^2 - 2r_t\mu_a + r_t^2] + y_a^2 - \sum_{t:a_t=a}\frac{r_t^2}{n_a}) 
         = \sum_{t=1}^T \frac{(\mu_{a_t}^2 - r_t^2)}{\sigma_{a_t}^2} + C~,
    \end{align*}
    where $C$ is a constant w.r.t. $\bmu$. Thus minimizing the LHS and RHS yields the same solution. 
\end{proof}

\section{A note on relative feedback}
\label{app:relative}

Dueling bandits~\citep{yue2009interactively, sui2018advancements, bengs2021preference} are the simplest-to-analyze as rewards are observed in the same format as the latent state--as relative preference feedback. Let $\mathcal{D}_T = ((a_t, a'_t, \tilde{r}_t))_{t=1}^T$ be a sequence of preference feedback events where $a_t, a'_t \in [m]$ are two competing actions and $\tilde{r}_t \in \{0,1\}$ indicates which action was preferred. Assuming that there are latent, noisy and continuous rewards $r_t, r'_t$ for the two actions, let $\tilde{R}_t = \mathds{1}[R_t \geq R'_t]$ indicate noisy preferences for $a_t$ over $a'_t$ and $\tilde{r}_t$ its realization. 

If for any latent state $z \in [m]$, there exists an (unknown) margin parameter $\rho > 0$ such that most of the time, with a margin $\rho$, the reward for $a$ is higher than the reward for $a'$, if $a$ is preferred to $a'$ 
then, 
$$
p(\tilde{R}_t = 1 \mid a_t, a'_t, z) \leq \left\{
\begin{array}{ll}
\frac 1 2 -\rho, & a_t \succeq_z a_t' \\
1, & \mbox{otherwise}
\end{array}\right.
$$
provides a crude upper bound on the likelihood of a single reward from $\mathcal{D}_T$ under $z$. In other words, the probability of observing $r_t > r'_t$ is less than $1/2-\rho$ if the action $a'_t$ has lower rank than $a_t$. Defining $n_i(z)$ to be the number of observed inversions of the rank imposed by $z$
$
n_i(z) = \sum_{t=1}^T (\mathds{1}[I_{a_t}(z) > I_{a'_t}(z)] \neq \mathds{1}[r_t \leq r'_t])~
$
then, we can upper bound the full likelihood under $z$ as 
\begin{equation}\label{eq:rel_lik_bound}
p(\mathcal{D}\mid z) \leq (\frac 1 2 -\rho)^{n_i(z)}\cdot 1^{T-n_i(z)} \leq 2^{-n_i(z)}
\end{equation}
and, likewise, the posterior $p(z \mid \mathcal{D}) \propto p(\mathcal{D}\mid z)p(z)$.

Bounding the posterior as in \eqref{eq:rel_lik_bound} allows us to rule out candidate latent states as the probability decays with the number of inversions, and a similar posterior sampling algorithm like \lpbTS{} for preference feedback can be obtained.

\section{Absolute to relative feedback}
\label{app:absolute_feedback}
Absolute reward feedback can always be turned into preference feedback. For example, plays and rewards $(a_1, r_1), (a_2, r_2), (a_3, r_3), (a_4, r_4)$ can be paired up consecutively: $(a_1, a_2, \mathds{1}[r_1>r_2]), (a_3, a_4, \mathds{1}[r3>r4])$. This ensures that different pairs comprise independent events, unlike, say, an all-pairs comparison. However, this procedure is likely very inefficient, statistically. Rewards obtained for each action provide much more information than is used by considering the pairs of most recent actions. Moreover, the algorithm does not rule out playing the same action twice, which means it is prone to getting stuck in local optima.

\section{Rarity of Similar Preference Orderings in LPB}
\label{app:similarity_orderings}
\begin{observation} \textbf{Rarity of Similar Preference Orderings}.
Consider $k$ actions in the LPB framework, with $O_{z_1}$ and $O_{z_2}$ as two distinct preference orderings drawn randomly from the set of all permutations of $k$ actions. The probability that $O_{z_1}$ and $O_{z_2}$ differ in exactly two positions is:
$$
P(|\{i \mid O_{z_1}(i) \neq O_{z_2}(i)\}| = 2, i\in[k]) = \frac{\binom{k}{2}}{k! - 1}
$$
\end{observation}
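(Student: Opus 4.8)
The plan is to reduce the event that $O_{z_1}$ and $O_{z_2}$ differ in exactly two positions to counting transpositions of a fixed permutation, and then to divide by the number of admissible choices of $O_{z_2}$. Concretely, under the stated model the pair $(O_{z_1},O_{z_2})$ is drawn uniformly without replacement from the $k!$ permutations of $[k]$. By exchangeability it suffices to condition on an arbitrary fixed value of $O_{z_1}$ and regard $O_{z_2}$ as uniform over the remaining $k!-1$ permutations; since the resulting conditional probability will turn out not to depend on $O_{z_1}$, the law of total probability gives the same unconditional value.

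The combinatorial heart of the argument is the claim that, for a fixed $O_{z_1}$, a permutation $O_{z_2}\neq O_{z_1}$ agrees with $O_{z_1}$ outside exactly two coordinates if and only if $O_{z_2}$ is obtained from $O_{z_1}$ by swapping the entries in some pair of positions $\{i,j\}$ with $i<j$. The ``if'' direction is immediate. For ``only if'', suppose $O_{z_1}$ and $O_{z_2}$ agree on every position outside $\{i,j\}$; since both are bijections onto $[k]$, the unordered pair $\{O_{z_2}(i),O_{z_2}(j)\}$ must coincide with $\{O_{z_1}(i),O_{z_1}(j)\}$, and requiring a genuine disagreement at both $i$ and $j$ forces $O_{z_2}(i)=O_{z_1}(j)$ and $O_{z_2}(j)=O_{z_1}(i)$. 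Next I would check that distinct position pairs give distinct permutations (they do, because the two swapped entries of $O_{z_1}$ are distinct, so no swap fixes $O_{z_1}$ and no two distinct swaps produce the same ordering), yielding exactly $\binom{k}{2}$ favourable outcomes for $O_{z_2}$. Dividing by $k!-1$ gives $\binom{k}{2}/(k!-1)$. I would also note in passing that ``differ in exactly one position'' is impossible: agreement on $k-1$ coordinates of a permutation forces agreement on the last, so two distinct orderings differ in at least two positions, which is why the first nontrivial case is the one under consideration.

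I do not expect a real obstacle in this proof; the only points demanding care are (i) pinning down the sampling convention (without replacement, hence denominator $k!-1$ rather than $k!$; whether the pair is ordered or unordered does not affect the conditional count), and (ii) verifying that the correspondence between ``differs in exactly two positions'' and ``a single swap'' is a genuine bijection with no double counting, which is precisely what makes the numerator exactly $\binom{k}{2}$.
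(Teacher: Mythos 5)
Your proof is correct and follows essentially the same route as the paper's: both arguments reduce ``differ in exactly two positions'' to ``related by a single transposition,'' count the $\binom{k}{2}$ transpositions, and divide by the $k!-1$ admissible choices of the second permutation (the paper counts ordered pairs over the full sample space, which is equivalent to your conditioning on $O_{z_1}$). Your explicit verification of the ``only if'' direction and the remark that differing in exactly one position is impossible are slightly more detailed than the paper's treatment, but the substance is identical.
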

\begin{proof}
Consider two distinct random permutations $O_{z_1}$ and $O_{z_2}$ of $k$ actions. We need to find the probability that they differ in exactly two positions, i.e., $|\{i \mid O_{z_1}(i) \neq O_{z_2}(i)\}| = 2, i\in[k]$. Define the relative permutation $\tau = O_{z_2}^{-1} \circ O_{z_1}$. The differing positions are the non-fixed points of $\tau$ (where $\tau(i) \neq i$), so $\tau$ must be a \textit{transposition}, swapping two elements. The number of possible transpositions is $\binom{k}{2}$. The total number of ordered pairs $(O_{z_1}, O_{z_2})$ with $O_{z_1} \neq O_{z_2}$ is $k! \cdot (k! - 1)$. For each transposition $\tau$, there are $k!$ pairs where $O_{z_2} = O_{z_1} \circ \tau$, since $O_{z_1}$ can be any permutation. Thus, the number of favorable pairs is $\binom{k}{2} \cdot k!$.

Therefore, the probability is:
$$
\frac{\binom{k}{2} \cdot k!}{k! \cdot (k! - 1)} = \frac{\binom{k}{2}}{k! - 1}
$$

\end{proof}

\newpage
\section{Expanded Algorithm: \lpbTS{}~(Thompson Sampling for Latent Preference Bandits)}
We provide an expanded algorithm for \lpbTS{} in Algorithm~\ref{alg:lpbts_extended}. 
\begin{algorithm}
\caption{: \lpbTS{}~(Thompson Sampling for Latent Preference Bandits)}\label{alg:lpbts_extended}
\begin{algorithmic}[1]
\State \textbf{Input:} Number of arms $k$, latent states $m$, ordering matrix $O \in \mathbb{R}^{m \times k}$, noise $\sigma$
\State \textbf{Initialize:} For each arm $a \in [k]$, set $N_a = 0$, $S_a = 0$, $\hat{\mu}_a = 0$; for all $z \in [m]$, set $\log P_1(z) = -\log m$; set $t \gets 0$
\While{True}
    \If{$t = 0$}
        \State Select $A_t \sim \text{Uniform}([k])$
    \Else
        \State Compute normalized posterior:
        $$
        P_t(z) = \frac{\exp(\log P_t(z))}{\sum_{z'=1}^{m} \exp(\log P_t(z'))}.
        $$
        \State Sample $B_t \sim P_t(z)$
        \State Select $A_t = O[B_t,0]$
    \EndIf
    \State Observe reward $R_t$
    \State Update: $N_{A_t} \leftarrow N_{A_t}+1,\; S_{A_t} \leftarrow S_{A_t}+R_t$
    \State Set $\hat{\mu}_{A_t} = \frac{S_{A_t}}{N_{A_t}}$ (if $N_{A_t}>0$)
    \For{each $z \in [m]$}\Comment{Proposition~\ref{thmprop:isotonic}}
        \State Perform isotonic regression on the sequence $\{\hat{\mu}_{O[z,0]},\dots,\hat{\mu}_{O[z,k-1]}\}$ 
        with weights $\{N_{O[z,0]},\dots,N_{O[z,k-1]}\}$, obtaining estimates $\hat{\mu}_z[O[z,i]]$ for $i=0,\dots,k-1$. 
    \EndFor
    \ForAll{$z \in [m]$}
        \State Update: \Comment{~\eqref{eq:appr_posterior}, ~\eqref{eq:mu_est}}
        $$
        \log P_{t+1}(z) = \log P_t(z) - \frac{\bigl(R_t-\hat{\mu}_z[A_t]\bigr)^2}{2\sigma^2}.
        $$
    \EndFor
    \State Normalize $\{\log P_{t+1}(z)\}$ \Comment{\textit{log-sum-exp trick}}
    \State $t \gets t+1$
\EndWhile
\end{algorithmic}
\end{algorithm}

\paragraph{Computational Complexity.} The $\lpbTS{}$ algorithm has a computational time complexity of $O(T m k)$, where $T$ is the time horizon, $m$ is the number of latent states, and $k$ is the number of arms. This complexity is primarily from the $O(m k)$ cost per iteration, due to performing isotonic regression for each of the $m$ states on sequences of length $k$, repeated over $T$ iterations. The space complexity is $O(m k)$, dominated by the storing of the ordering matrix $O \in \mathbb{R}^{m \times k}$, with additional $O(k + m)$ space for arm and latent state variables being relatively minor.

\newpage
\section{Two-Stage Latent Preference Order Recovery}
\label{app:latent_recovery}
To recover the Latent Preference Order, we rely on methods for extracting preferences from pairwise comparisons~\citep{bradley1952rank} applied to our setting with latent structure. \citet{vigneau1999analysis} demonstrated that BTMs can be fit for latent structure with an EM approach, and in Figure~\ref{fig:comparison_recovery}, we provide an empirical comparison showing that our two-stage recovery compares favorably with EM Latent Preference Order recovery. 

The following provides an outline of our Preference Order Recovery
\label{sec:ips_recovery}
\paragraph{Extracting pairwise comparisons: }For $N$ instances, logged data with absolute feedback
$
\mathcal{D^\dag}_{T,i} = \{ (A_{t,i}, R_{t,i}) \}_{t=1}^T, \quad i = 1, \dots, N, A_{t,i} \in [k]
$ is used to obtain a dataset incorporating pairwise action comparisons for all instances, captured as
$
\mathcal{D} = \{ (r^{(n)}, Y^{(n)}) \}_{n=1}^N,
$
where $r^{(n)} \in \mathbb{R}^k$ are rewards (possibly incomplete), and $Y^{(n)} \in \{0,1\}^{k \times k}$ has $y_{ij}^{(n)} = 1$ if item $i$ beats $j$ in observation $n$, else 0.
\paragraph{Clustering instances on observed absolute rewards: }Observations are then clustered into $m$ groups $\mathcal{D}_1, \dots, \mathcal{D}_m$ using $\{ r^{(n)} \}$ (via KMeans with zero imputation for incomplete rewards) so that each $n$ belongs to a cluster $z\in[m]$.

\paragraph{Fitting cluster BTMs: } With acces to pairwise comparisons, the preference strengths of the actions in each cluster can be obtained, defined with the \textit{utility}, $\boldsymbol{\beta}^{(z)} = (\beta_0^{(z)}, \dots, \beta_{k-1}^{(z)})$, $\sum_{i=0}^{k-1} \beta_i^{(z)} = 0$ ~\citep{bradley1952rank} where the utility defines the preference strength for each action. We use a Logistic Bradley-Terry model:
$
P(i \succ j \mid z) = \sigma(\beta_i^{(z)} - \beta_j^{(z)}), \quad \sigma(x) = \frac{1}{1 + e^{-x}}.
$

Bradley-Terry models~\citep{bradley1952rank} per cluster are then fit, which give the cluster-data log-likelihood for a cluster $z$ as
$$
\ell^{(z)}(\boldsymbol{\beta}^{(z)}) = \sum_{i \neq j} \Bigl[ y_{ij}^{(z)} \ln \sigma(\beta_i^{(z)} - \beta_j^{(z)}) + \bigl(w_{ij}^{(z)} - y_{ij}^{(z)}\bigr) \ln \Bigl(1 - \sigma(\beta_i^{(z)} - \beta_j^{(z)})\Bigr) \Bigr].
$$
with agregated \textit{cluster} pairwise comparisons
$$
y_{ij}^{(z)} = \sum_{n \in \mathcal{D}_z} y_{ij}^{(n)}, \quad w_{ij}^{(z)} = \sum_{n \in \mathcal{D}_z} I_{ij}^{(n)},
$$
and where $I_{ij}^{(n)} = 1$ if pair $(i,j)$ is observed in $n$.
\paragraph{Post-step Sigmoid:} The real-valued cluster utilities obtained, $\boldsymbol{\beta}^{(z)}$, are then passed through a Sigmoid to restrict them to $[0, 1]$, obtaining $\tilde{\boldsymbol{\beta}}^{(z)} = \sigma(\boldsymbol{\beta}^{(z)}) = \frac{1}{1 + e^{-\boldsymbol{\beta}^{(z)}}}$. This Sigmod step also enables us to model cluster \textit{reward mean vectors $\hat{\mu}^z$}, by scaling $\tilde{\boldsymbol{\beta}}_i^{(z)}$ to an appropriate range, for example $[1, 5]$ for movie ratings with the MovieLens dataset.

A Preference Order $O$ is obtained simply by sorting $\tilde{\boldsymbol{\beta}}^{(z)}$. Algorithm~\ref{alg:ips_logistic} provides a summary for this.

\begin{algorithm}[H]
\caption{Latent Logistic Bradley Terry Model (BTM) for recovering $O$}\label{alg:ips_logistic}
\begin{algorithmic}[1]
\Require Data $\{ (r^{(n)}, Y^{(n)}) \}_{n=1}^N$, number of clusters $m$ 
\State Cluster $\{ r^{(n)}\}_{n=1}^N $ into $\mathcal{D}_1, \dots, \mathcal{D}_m$ (via KMeans) \Comment{Pre-step clustering}
\For{$z = 1$ to $m$}
    \For{$i, j = 0$ to $k-1$, $i \neq j$}
        \State $y_{ij}^{(z)} \gets \sum_{n \in \mathcal{D}_z} y_{ij}^{(n)}$
        \State $w_{ij}^{(z)} \gets \sum_{n \in \mathcal{D}_z} I_{ij}^{(n)}$ 
    \EndFor
    \State $\hat{\boldsymbol{\beta}}^{(z)} \gets \arg\max_{\boldsymbol{\beta}^{(z)}} \ell_{}^{(z)}(\boldsymbol{\beta}^{(z)})$ \textbf{subject to} $\sum_{i=0}^{k-1} \beta_i^{(z)} = 0$
    \State $\tilde{\boldsymbol{\beta}}^{(z)} \gets \sigma(\hat{\boldsymbol{\beta}}^{(z)})$ \Comment{Post-step Sigmoid}
    \State $O_z \gets \argsort(-\tilde{\boldsymbol{\beta}}^{(z)})$
\EndFor
\State \textbf{Output:} $\{ O_z \}_{z=1}^m$, $\{ \tilde{\boldsymbol{\beta}}^{(z)} \}_{z=1}^m$ 
\end{algorithmic}
\end{algorithm}

\begin{figure}
    \centering
    \begin{subfigure}{0.95\textwidth}
    \includegraphics[width=\textwidth]{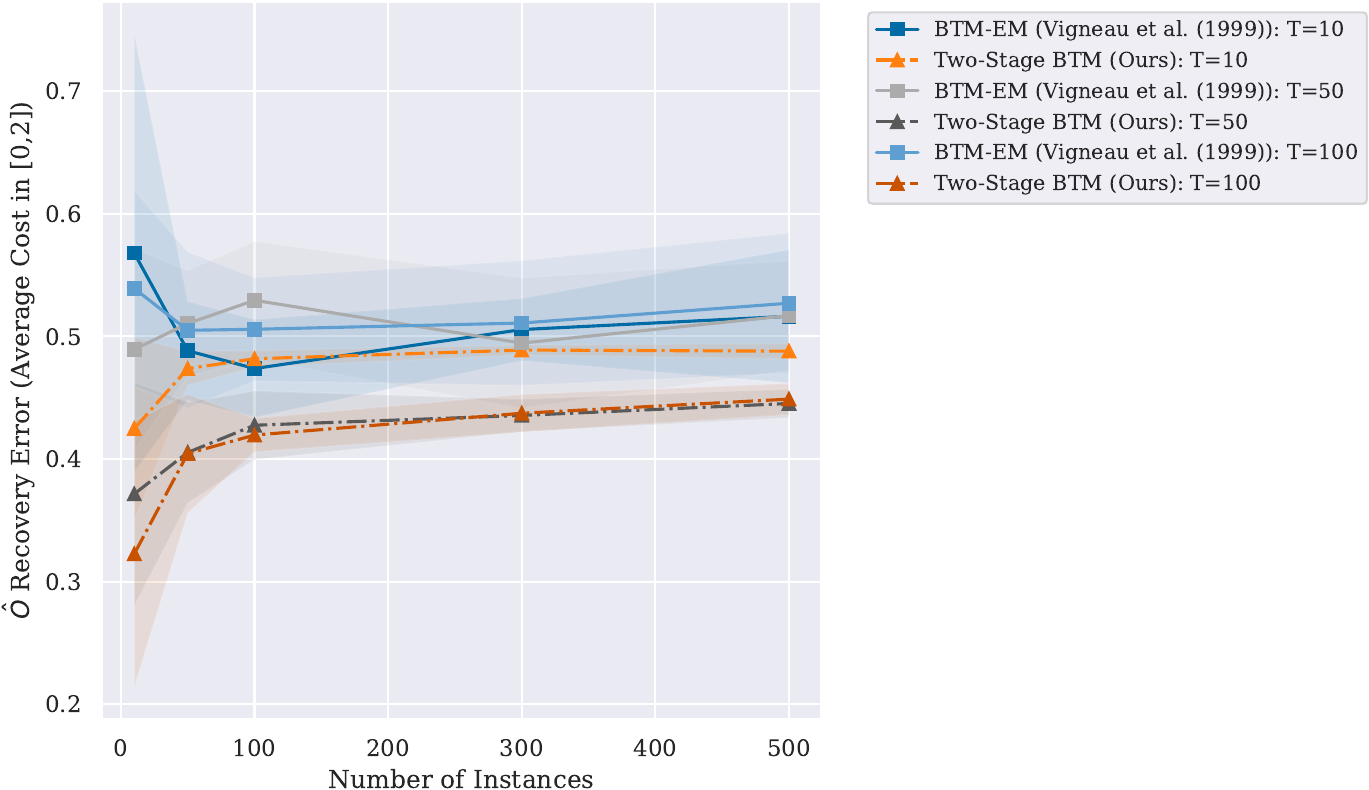}
    \end{subfigure}
    \caption{\textbf{Synthetic Recovery Experiment (O known, generated according to Section~\ref{sec:experiments})}. Illustration of the average matching error between true and recovered orderings, computed as the average of (1 - Kendall's tau correlation) after optimal matching using the Hungarian algorithm. The error decreases as the number of instances increases, indicating improved recovery accuracy with more data. Our two-stage recovery compares favorably to an EM approach.}
    \label{fig:comparison_recovery}
\end{figure}
\clearpage
\newpage
\section{Additional Experiments}
Here we outline additional results from our synthetic ablation study in Figures \ref{fig:combined_all3} and \ref{fig:combined_all2}, and our MovieLens experiments (with 1M Dataset in Figure~\ref{fig:movielens_combined_1m}, and 32M Dataset in Figure~\ref{fig:movielens_combined_32m}).
\label{app:additional_experiments}
\begin{figure}[t]
    \centering
    
    \begin{subfigure}[a]{0.89\textwidth}
        \centering
        \includegraphics[width=\textwidth]{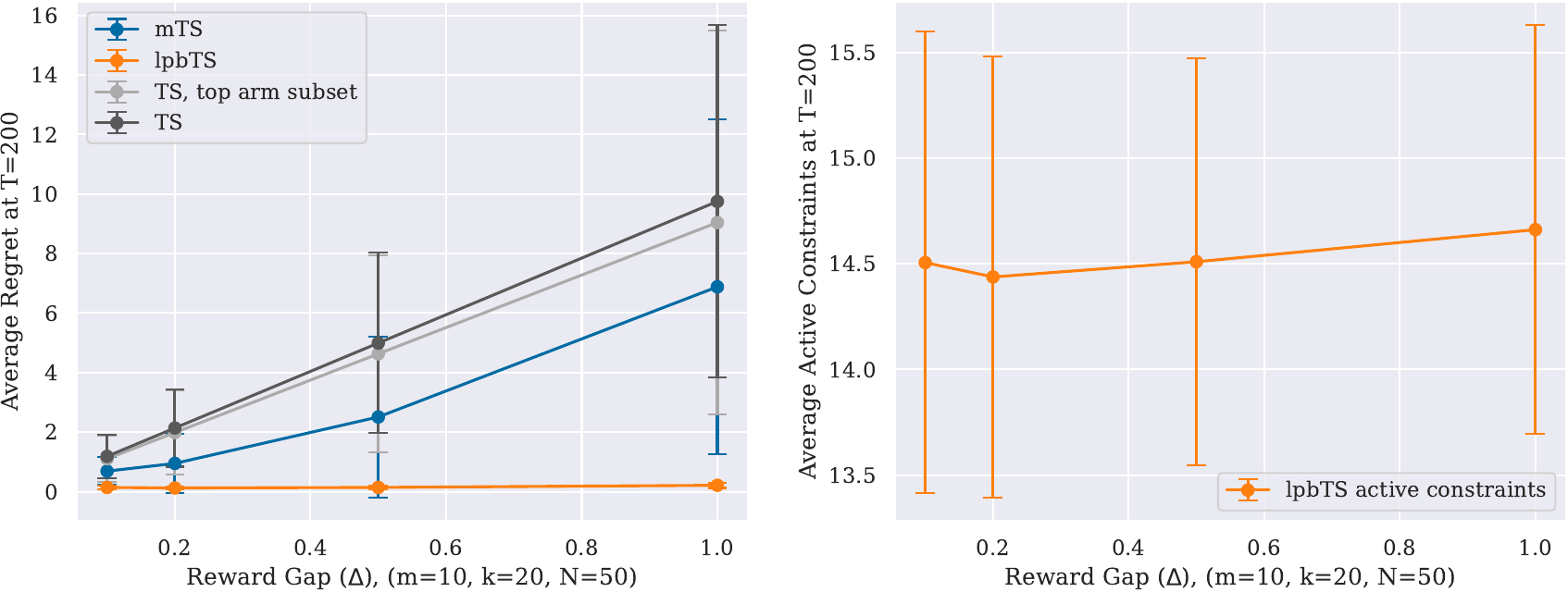}
        \caption{ Varying the reward gap $\Delta~(k=20, m=10, N=50, T=200,  \Delta\in[0.1, 0.2, 0.5, 1.0]$). \textbf{\textit{Left:}}  Observed average regret at $T=200$. \textbf{\textit{Right:}} Observed average active constraints.}
        \label{fig:app_active_D}
    \end{subfigure}
    \vspace{0.5cm} 
    \begin{subfigure}[b]{0.89\textwidth}
        \centering
        \includegraphics[width=\textwidth]{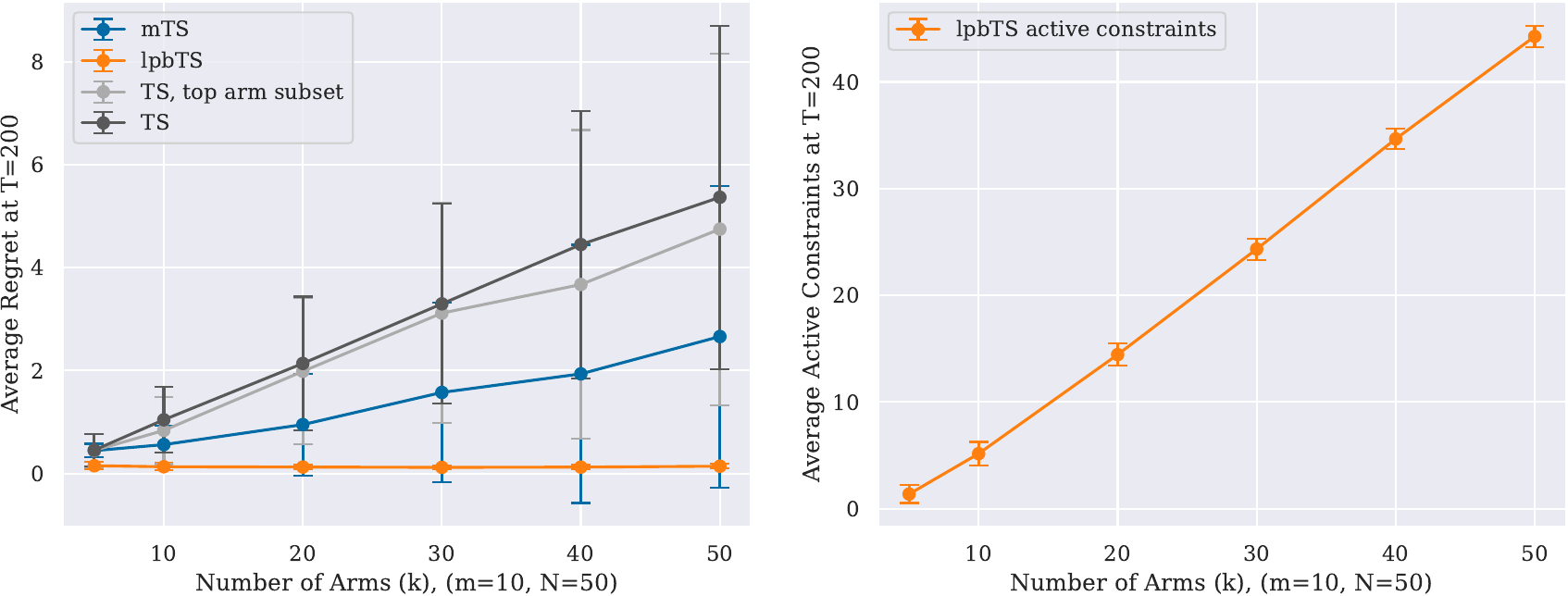}
        \caption{ Varying the number of arms $k~(m=10, N=50, T=200,  k\in[5, 10, 20, 30, 40, 50]$). \textbf{\textit{Left:}}  Observed average regret at $T=200$. \textbf{\textit{Right:}} Observed average active constraints.}
        \label{fig:app_active_k}
    \end{subfigure}
    \hfill
    \begin{subfigure}[c]{0.89\textwidth}
        \centering
        \includegraphics[width=\textwidth]{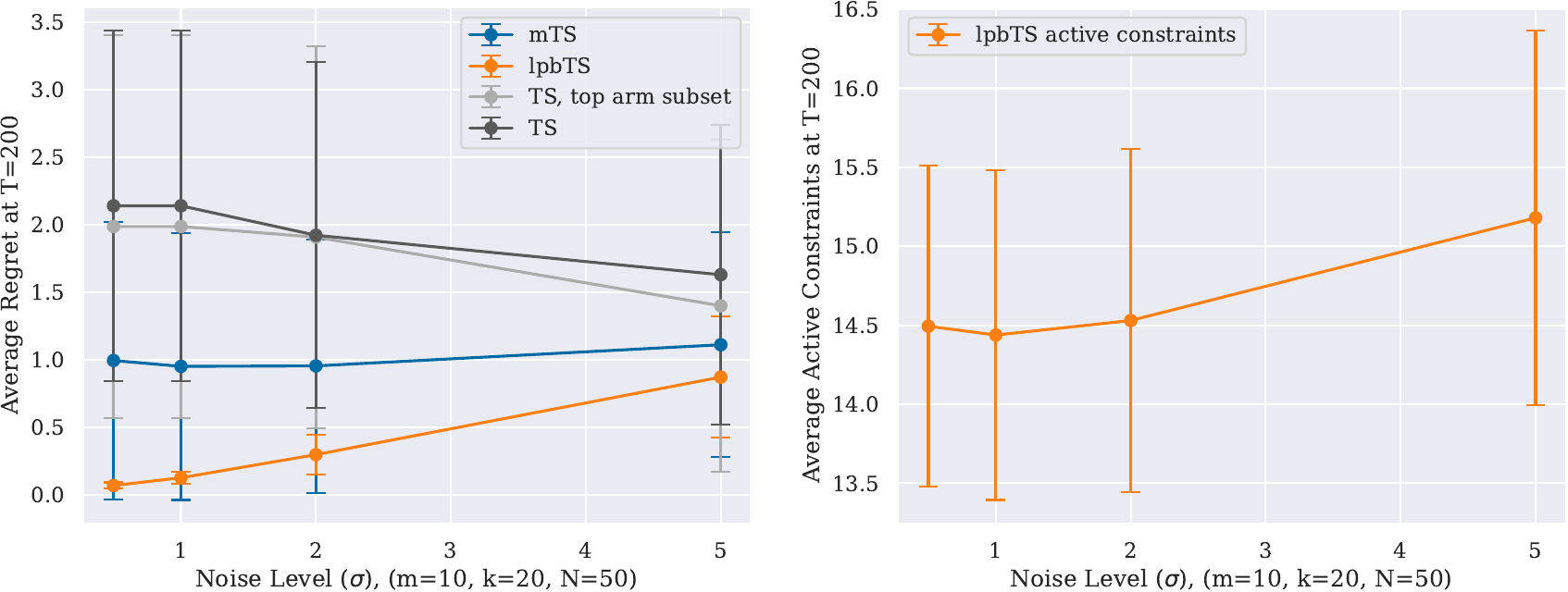}
        \caption{ Varying the reward noise $\sigma~(k=20, m=10, N=50, T=200,  \sigma\in[0.5, 1, 2, 5]$). \textbf{\textit{Left:}}  Observed average regret at $T=200$. \textbf{\textit{Right:}} Observed average active constraints.}
        \label{fig:app_active_sigma}
    \end{subfigure}
    \caption{Synthetic experiment, instance rewards in different scales.  With increasing reward separation $\Delta$, \lpbTS{} maintains a stably low regret by leveraging structural constraints and \mts{} regret only worsens because the the draw interval for the mean reward of the optimal arm grows with $\Delta k$. The active constraints in \lpbTS{} increases as $k$ grows because the number of possible permutations increase like $k!$, so the probability of having large differences between states grows when $k$ grows, and there's need to distinguish more alternative states. When reward noise increases, distinguishing alternative states becomes harder as shown in the increase in active constraints in  \lpbTS{}.}
    \label{fig:combined_all3}
\end{figure}

\begin{figure}[t]
    \centering
    \begin{subfigure}[a]{0.84\textwidth}
        \centering
        \includegraphics[width=\textwidth]{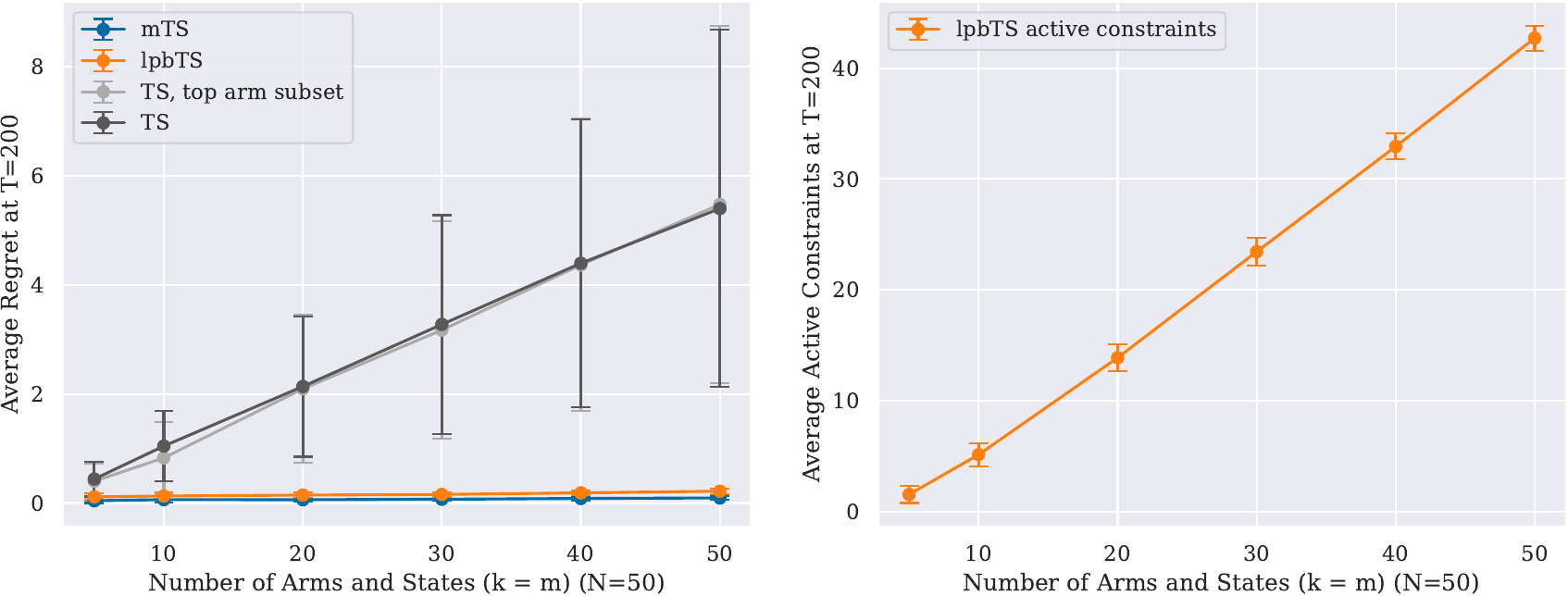}
        \caption{ Varying the number of arms $k$, and $m=k~(N=50, T=200,  k\in[5, 10, 20, 30, 40, 50]$). \textbf{\textit{Left:}}  Observed average regret at $T=200$. \textbf{\textit{Right:}} Observed average active constraints.}
        \label{fig:app_active_k_m}
    \end{subfigure}
    \begin{subfigure}[b]{0.84\textwidth}
        \centering
        \includegraphics[width=\textwidth]{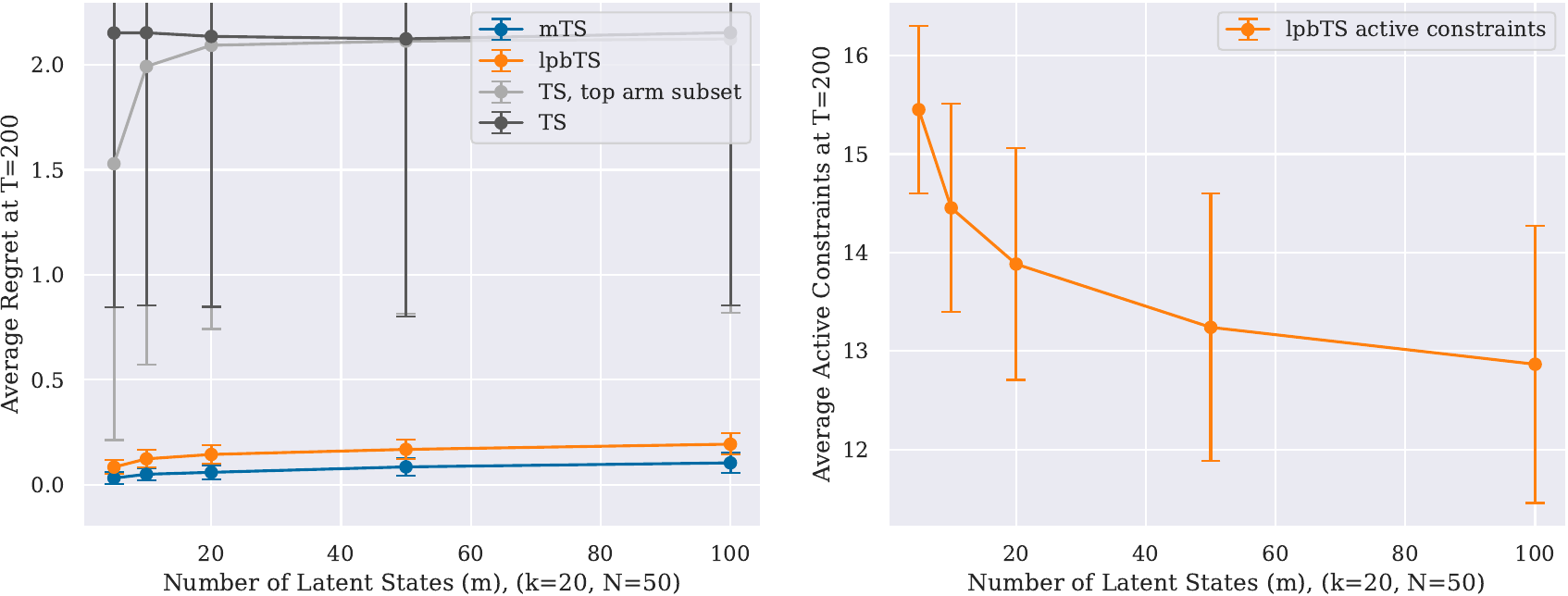}
        \caption{Varying the number of latent states $m$ ($k=20, N=50,  T=200, m\in[5, 10, 20, 50, 100]$). \textbf{\textit{Left:}} Observed average regret at $T=200$ \textbf{\textit{Right:}} Observed average active constraints at $T=200$. }
        \label{fig:app_varying_m_s}
    \end{subfigure}
    \hfill
    \begin{subfigure}[c]{0.84\textwidth}
        \centering
        \includegraphics[width=\textwidth]{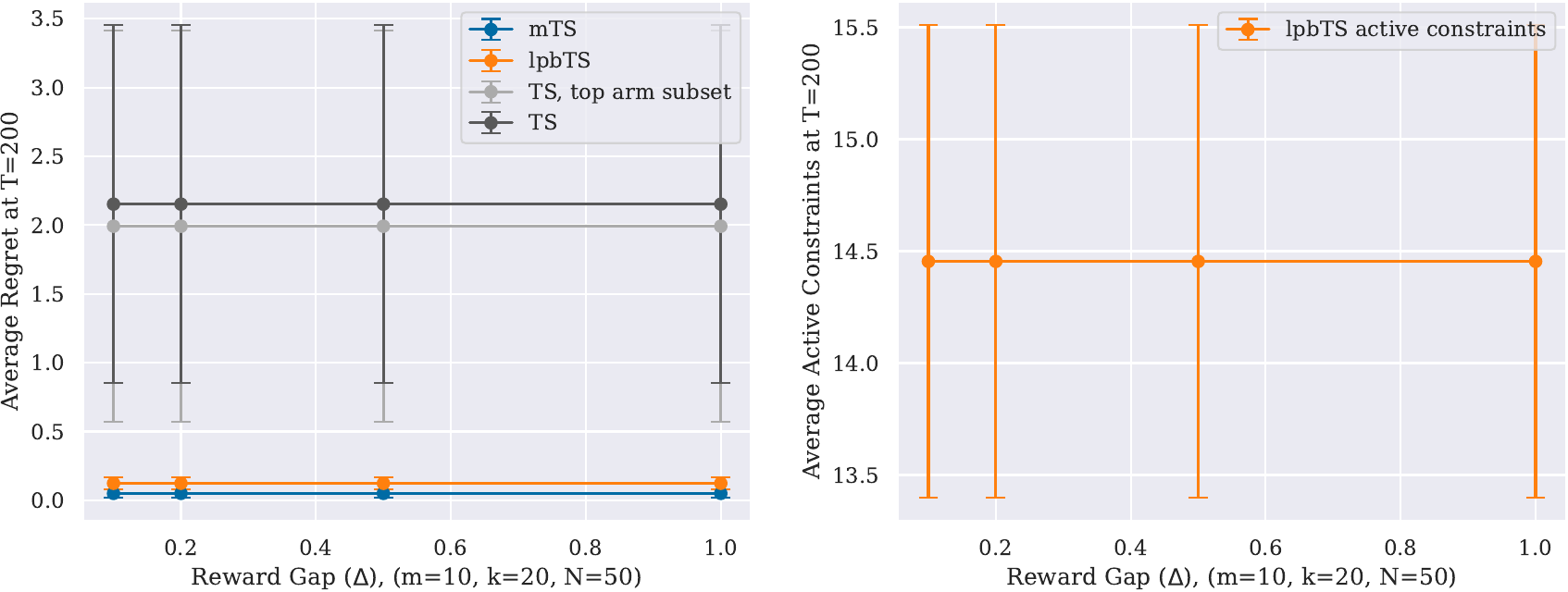}
        \caption{ Varying the reward gap $\Delta~(k=20, m=10, N=50, T=200,  \Delta\in[0.1, 0.2, 0.5, 1.0]$). \textbf{\textit{Left:}}  Observed average regret at $T=200$. \textbf{\textit{Right:}} Observed average active constraints.}
        \label{fig:app_active_D_s}
    \end{subfigure}
    \vspace{0.5cm} 
    \begin{subfigure}[d]{0.84\textwidth}
        \centering
        \includegraphics[width=\textwidth]{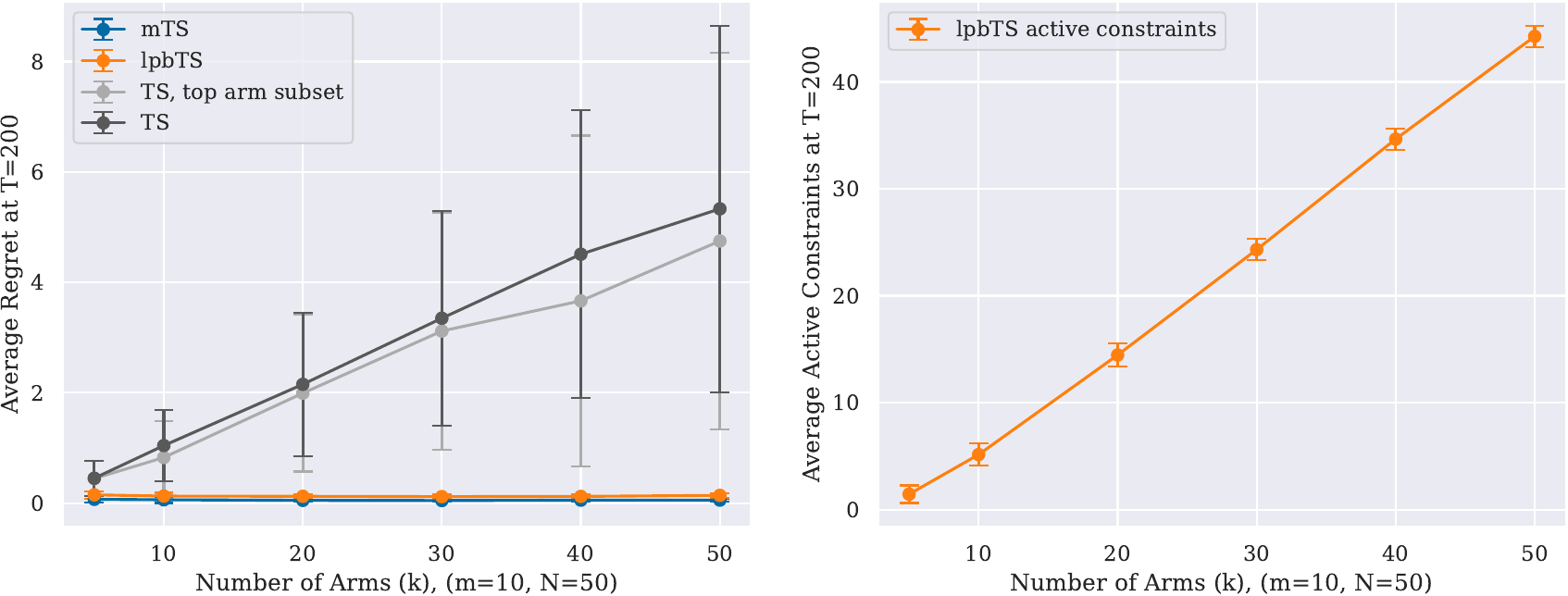}
        \caption{ Varying the number of arms $k~(m=10, N=50, T=200,  k\in[5, 10, 20, 30, 40, 50]$). \textbf{\textit{Left:}}  Observed average regret at $T=200$. \textbf{\textit{Right:}} Observed average active constraints.}
        \label{fig:app_active_k_s}
    \end{subfigure}
    
    \caption{Synthetic experiment, instance rewards in the same scale. Here, \mts{} outperforms in regret due to it's knowledge of true means. Active constraints observed in \lpbTS{} explain the latent structure characteristics described earlier more clearly.}
    \label{fig:combined_all2}
\end{figure}

\begin{figure}[t]
    \centering
    \begin{subfigure}[b]{0.99\textwidth}
        \centering
        \includegraphics[width=\textwidth]{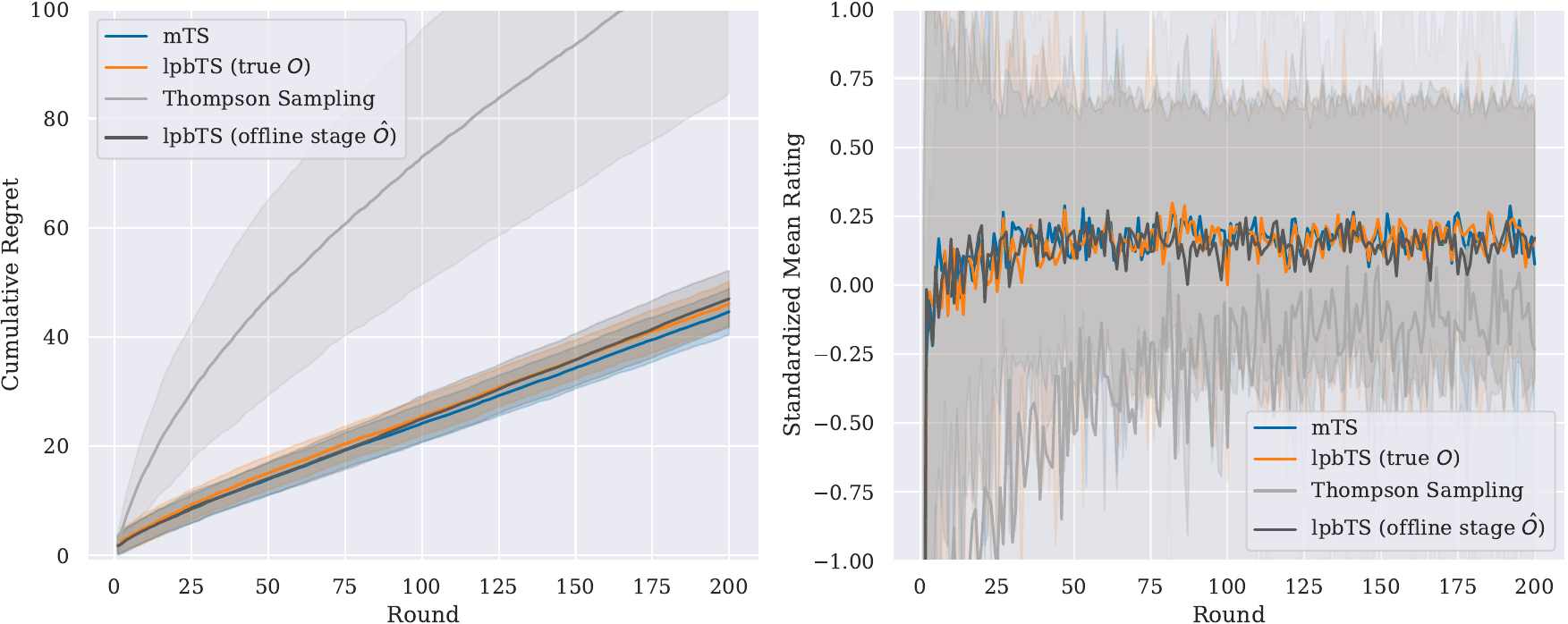}
        \caption{Movie ratings in the same scale for users in a latent state}
        \label{fig:movielens_same_scale_1m}
    \end{subfigure}
    \hfill
    \begin{subfigure}[b]{0.99\textwidth}
        \centering
        \includegraphics[width=\textwidth]{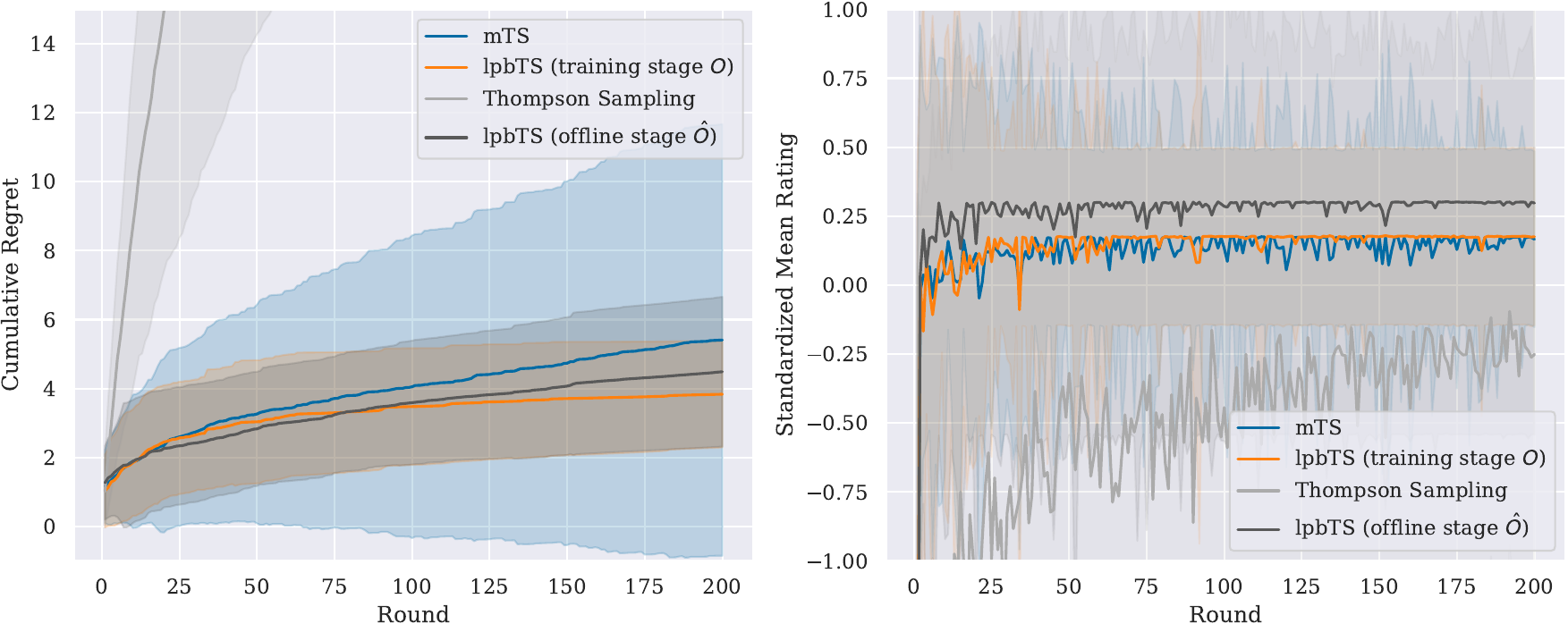}
        \caption{Movie ratings in different scales for users in a latent state}
        \label{fig:movielens_diff_scale_1m}
    \end{subfigure}
     \caption{MovieLens Experiment, 1M Dataset. Results match theory: \lpbTS{~\textbf{(Ours)}} is comparable to mTS in (a), outperforms in (b), and the two-stage recovery of $O$ is empirically validated. Furthermore, we see that when the offline stage $\hat{O}$ is trained with instance reward scales varying, it extracts preferences that are more robust (ground truth $O$ for both was trained with rewards in the same scale).}
    \label{fig:movielens_combined_1m}
\end{figure}

\begin{figure}[t]
    \centering
    \begin{subfigure}[b]{0.99\textwidth}
        \centering
        \includegraphics[width=\textwidth]{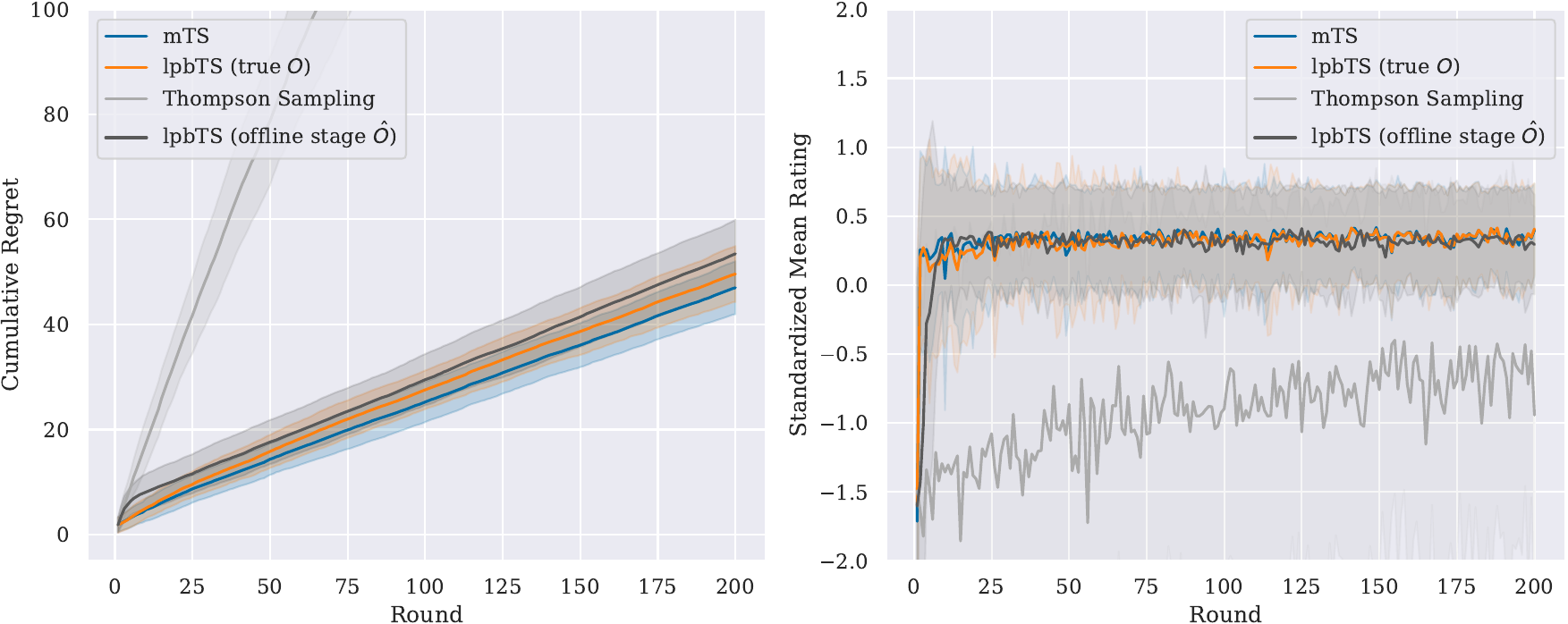}
        \caption{Movie ratings in the same scale for users in a latent state}
        \label{fig:movielens_same_scale_32m}
    \end{subfigure}
    \hfill
    \begin{subfigure}[b]{0.99\textwidth}
        \centering
        \includegraphics[width=\textwidth]{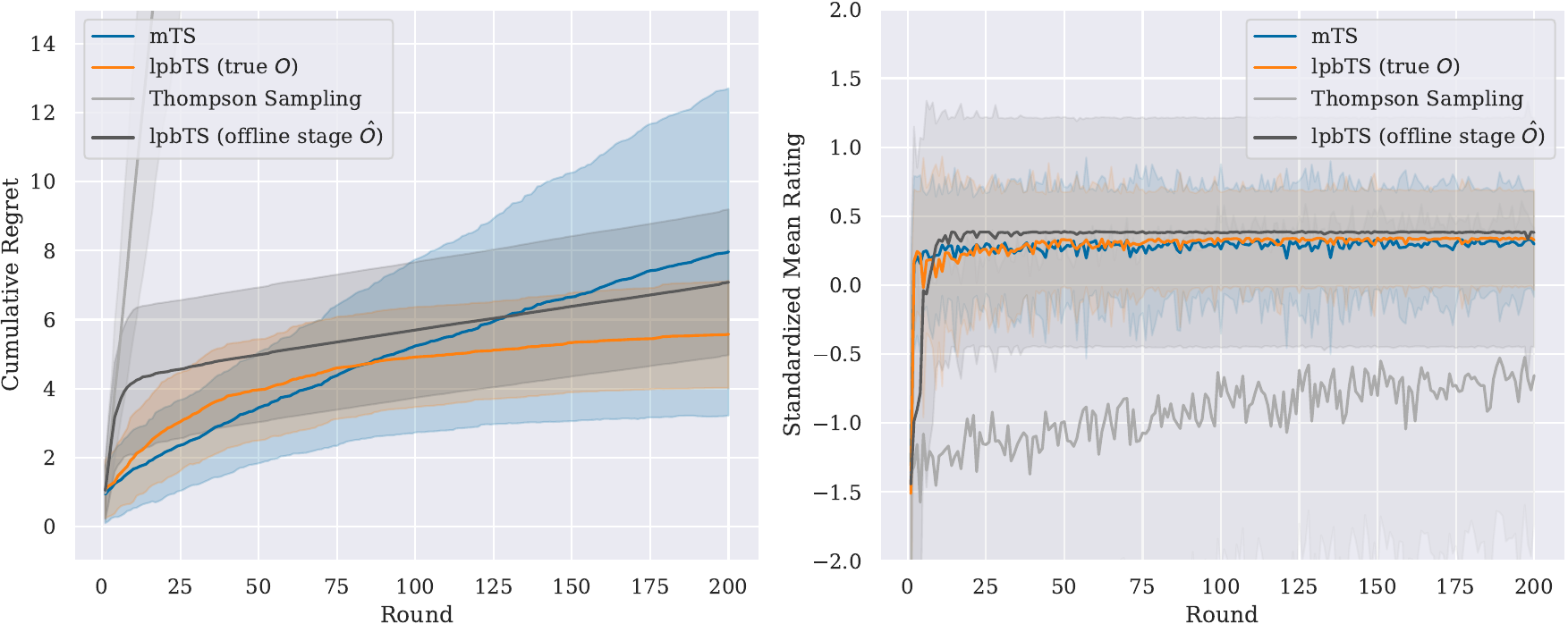}
        \caption{Movie ratings in different scales for users in a latent state}
        \label{fig:movielens_diff_scale_32m}
    \end{subfigure}
     \caption{MovieLens Experiment, 32M Dataset. Results match theory: \lpbTS{~\textbf{(Ours)}} is comparable to mTS in (a), outperforms in (b), and the two-stage recovery of $O$ is empirically validated. Furthermore, we see that when the offline stage $\hat{O}$ is trained with instance reward scales varying, it extracts preferences that are more robust (ground truth $O$ for both was trained with rewards in the same scale).}
    \label{fig:movielens_combined_32m}
\end{figure}
\clearpage
\newpage

\section{Computation Infrastructure and Code}
The synthetic simulations were run on a 2.6 GHz 6-Core Intel Core i7 Macbook laptop, with 16 GB RAM. Ablation studies and MovieLens experiments were done on a compute cluster with 2 nodes, each having an NVIDIA Tesla T4 GPU with 16GB RAM (Each with 4 Intel Xeon Gold 6226R CPU, 2.90GHz and 72 GB DDR4 RAM).

\end{document}